\title{Beyond DAGs:  Modeling Causal Feedback with Fuzzy Cognitive Maps}
\author{Osonde Osoba\footnote{\textbf{Osonde A. Osoba, Ph.D.} (oosoba@rand.org) is a researcher at the RAND Corporation and a professor at the Pardee RAND Graduate School, Santa Monica, CA, USA. He received his Ph.D. in Electrical Engineering from the University of Southern California.}, \;
Bart Kosko\footnote{\textbf{Bart Kosko, Ph.D., J.D.} (kosko@usc.edu) is a professor of Electrical and Computer Engineering and Law at the University of Southern California in Los Angeles.}}
\date{}
\begin{document}
\maketitle

\begin{abstract}
Fuzzy cognitive maps (FCMs) model feedback causal relations in interwoven webs of causality and policy variables.
FCMs are fuzzy signed directed graphs that allow degrees of causal influence and event occurrence.
Such causal models can simulate a wide range of policy scenarios and decision processes.
Their directed loops or cycles directly model causal feedback.
Their nonlinear dynamics permit forward-chaining inference from input causes and policy options to output effects.
Users can add detailed dynamics and feedback links directly to the causal model or infer them with statistical learning laws.
Users can fuse or combine FCMs from multiple experts by weighting and adding the underlying fuzzy edge matrices and do so recursively if needed.
The combined FCM tends to better represent domain knowledge as the expert sample size increases if the expert sample approximates a random sample.
Many causal models use more restrictive directed acyclic graphs (DAGs) and Bayesian probabilities.
DAGs do not model causal feedback because they do not contain closed loops.
Combining DAGs also tends to produce cycles and thus tends not to produce a new DAG.
Combining DAGs tends to produce a FCM.
FCM causal influence is also transitive whereas probabilistic causal influence is not transitive in general.
Overall:  FCMs trade the numerical precision of probabilistic DAGs for pattern prediction, faster and scalable computation, ease of combination, and richer feedback representation.
We show how FCMs can apply to problems of public support for insurgency and terrorism and to US-China conflict relations in Graham Allison's Thucydides-trap framework.
The Thucydides-trap FCM predicts war-like conflict for nearly $80\%$ of input scenarios where we treat all such scenarios as equally likely\footnote{Sample code for FCM modeling and simulation models for examples discussed below are available at: \url{https://github.com/oosoba/fcm-evolve}}.
{The appendix gives the textual justification of the Thucydides-trap FCM and extends our earlier theorem~\parencite{osoba-kosko2017} that shows how a concept node transitively affects downstream nodes.}
The more general result shows the transitive and total causal influence that upstream nodes exert on downstream nodes.
\end{abstract}

\section{Fuzzy Cognitive Maps for Causal Modeling}

FCMs allow users to quickly draw causal diagrams of complex social or other processes \parencite{kosko1986}.
These causal diagrams can have closed loops or paths in them.
The closed loops directly model feedback among the causal concepts or nodes.
They are fuzzy because the causal arrows in the diagrams admit degrees or shades of gray.

Users can make what-if predictions with a given FCM:  What happens given this input policy?
The predictions are not numerical predictions.
They are pattern predictions or repeating sequences of events.
Users can also ask causal why questions:  Why did these events occur?
Different users can also fuse or combine their FCM diagrams into a unified FCM.

These FCM techniques are quite general and have led to numerous applications ~\parencite{papageorgiou2013review, glykas2010,  fcm-papageorgiou2013,  amirkhani2017review}.
A recent application developed an FCM representation of how `Brexit' scenarios could affect energy demand in the United Kingdom~\parencite{ziv-brexit2018}.
Another application used FCM techniques to represent the numerous social factors involved in homelessness ~\parencite{mago2013analyzing}.

A FCM uses an arrow or directed-graph \emph{edge} to describe how one concept \emph{node} causally affects another concept node.
So a FCM represents causality as a directed edge in graph.
The causality can be partial or fuzzy because the directed edge has a numerical magnitude that states the degree or intensity of causality.
A FCM model of military deterrence might include concept nodes for the degree of national military capability and for the degree of a threat's credibility.
The directed causal edges among two or more such concept nodes define a directed graph or \emph{digraph}.
A key aspect of FCMs is that in general their digraphs contain cycles or feedback loops.
This means that most FCMs are not directed \emph{acyclic} graphs or DAGs.

\subsection{Fuzz as a Matter of Degree}

Fuzz is a term of art in logic and decision science.
Fuzz denotes degree of truth or degree of causality.
Fuzzy logic extends binary logic to multi-valued logic and allows rule-based approximate reasoning \parencite{zadeh1965, Kosko1993}.
Decision theorist Richard Bellman was one of the founders of fuzzy theory and even defined abstraction as the estimation of a fuzzy-set membership curve \parencite{bellman1966abstraction}.

FCMs are fuzzy both in their causal edges and often in there concept nodes.
Fuzzy causal edges denote partial causality.
All-or-none causality can still occur but only as the endpoints of the spectrum of causal influence.
The same holds for the activation of a concept node.
Concept nodes are often binary in practice and in the examples below.
More sophisticated FCM models use fuzzy gray-scale concept nodes and may also use time lags.

Feedback loops in a FCM imply that a FCM is a dynamical system.
Inputs stimulate the FCM system.
The causal activation then swirls through the FCM until the FCM falls into a dynamical equilibrium.
The FCM stays in that dynamical equilibrium until a new input perturbs the system.
Most FCMs quickly reach an equilibrium.
The simplest equilibrium is a fixed-point attractor or a single state of the FCM that repeats over and over.
The more common equilibrium is a limit cycle where a sequence of states repeats over and over in a loop.
The equilibrium serves as the system's what-if prediction or forward inference from the input.
FCMs with binary nodes usually converge to a limit cycle.
The cycle of states is a form of pattern prediction.

\subsection{Comparison with Other Methods}

Two other approaches to modeling causal worlds are system-dynamics (SD) models~\parencite{sterman2000, abdelbari2018} and Bayesian belief networks (BBNs)~\parencite{pearl2009causality}.

System-dynamics models allow users to represent and simulate causal interactions within relatively complex systems.
But SD models have static parameters.
Domain experts or random experiments often choose the parameters of the subsystems and their interconnections.
FCMs admit both data-driven and expert-driven adaptation of the model structure \emph{and} the model parameters.
Statistically learning algorithms estimate causal edges from training data.
Experts can also state edge values directly.
SD models often account for stochasticity by using sensitivity analyses at the end of modeling.
FCMs build uncertainty into their very structure.

BBNs model uncertain causal worlds with conditional probabilities.
A user must first state a known joint probability distribution over all the nodes of the directed graph.
This may not be practical for large numbers of nodes.
Forward inference on a BBN also tends to be computationally intensive because it marginalizes out nodes.
The directed graph is usually \emph{acyclic} graph and thus has no closed loops.
The acyclic structure simplifies the probability structure but it ignores feedback among the causal units.
This is an important limitation for social and behavioral models that often feature causal loops.
The acyclic constraint also makes it hard to combine BBN causal models from multiple experts because such combination may well produce a cycle.
FCMs contain causal loops by design.
Combining FCMs only tends to increase the loop or feedback structure and thereby produce richer feedback dynamics.
It also tends to improve the modeling accuracy in many cases.
FCM forward-inference process is simple and fast because it uses only vector-matrix multiplication and thresholding.
But BBNs do give \emph{precise} numerical probability descriptions and not the mere pattern predictions of FCMs.
So \emph{FCMs trade numerical precision for pattern prediction, faster and scalable computation, ease of combination, and richer feedback representation.}

FCMs offer many advantages for social-scientific modeling and simulation.
FCMs can capture the causal beliefs of domain experts.
FCM dynamics can reveal global ``hidden patterns'' in small or large causal webs.
These patterns tend to far from obvious when examining a large-scale FCM model.
The domain expert expert expresses his beliefs about how relevant factors causally relate to one another.
Some examples of such expert-based FCMs include public support for terrorism~\parencite{osoba-kosko2017}, blood-clotting reactions~\parencite{taber-yager-helgason2007}, and medical diagnostics~\parencite{stylios2008fuzzy}.

Analysts can also convert documents into FCMs.
We show below an example translating Graham Allison's textual descriptions of his proposed Thucydides Trap of international relations into a predictive FCM.
Combining such documents and their FCMs gives one way to approach ``big knowledge'' because combing FCMs always results in a new FCM.
FCMs can also apply driving agents' behaviors or decisions in agent-based models.
Earlier FCMs ~\parencite{dickerson-kosko1993} show how lone or combined FCMs can govern the behavior of virtual actors.
These behaviors can be simple or complex depending on the driving FCM.
Applying learning laws to the agents' FCMs can simulate the effect of agents learning new behaviors.

\section{Fuzzy Cognitive Maps as Models of Causal Feedback}

FCMs offer a practical way to model and process the interwoven causal structure of policy and decision problems.
Numerous FCM applications have appeared in recent years.
They range from control engineering and medicine to policy analysis and social modeling \parencite{papageorgiou2013review, glykas2010,  fcm-papageorgiou2013,  amirkhani2017review}.

FCMs are fuzzy causal signed directed graphs \parencite{kosko1986}.
The digraphs are fuzzy because in general both their directed causal edges and their concept nodes admit degrees.
So they can assume more values than just the bivalent extremes of on or off.
Their multivalued  causal edges directly model degrees of causal influence.
Users can express their causal and policy models by drawing signed and weighted causal edges between concept nodes.
This does require specifying the nonlinear dynamical structure of the concept nodes.
Statistical machine-learning algorithms can also infer or tune the edges using training data if representative data is available.
A differential Hebbian learning law can approximate a FCM's directed edges of partial causality using time-series training data.

Figure \ref{fig:herd-dyn-ex} shows a FCM fragment that models a simple undersea causal web of dolphins in the presence of sharks or other survival threats.
This simple model uses concepts that are either active or not active.
The directed causal edges likewise show all-or-none causal increase or decrease (missing edges have zero weight).
Section 3 shows how to make what-if inferences or predictions with this simple FCM that has binary concept nodes and trivalent causal edges.
The inference process uses only vector-matrix multiplication and thresholding.
More complex FCMs can activate concept nodes with soft thresholds or bell curves or other nonlinear transformations of inputs to outputs.
Figure 3 in \parencite{osoba-kosko2017} shows six examples of soft thresholds.
%Figure \ref{fig:actvns} shows six types of soft thresholds.
Figure \ref{fig:fcm-combo} shows how to combine FCMs even when some of the FCMs use different concept nodes.
A causal learning law can approximate the causal edge values given time-series data of the concept nodes.
Figure \ref{fig:fcm-adapt} shows the approximation path of a causal edge from a FCM that models public support for insurgency and terrorism.

A FCM's overall cyclic signed digraph structure resembles a feedback neural or semantic network.
FCMs also resemble causal loop diagrams and systems-dynamics models \parencite{sterman2000, abdelbari2018} because nonlinear difference or differential equations describe FCM concept nodes (and sometimes FCM causal edges).
A FCM's directed graph structure permits inference through forward chaining.
It also allows the user to control the level of causal or conceptual granularity.
A FCM concept node can itself be part of another FCM or of some other nonlinear system.
Feedforward fuzzy rule-based systems can also model the input-output structure of a concept node just as they can model a single causal edge that connects one concept node to another.
Such fuzzy systems are uniform function approximators if they use enough if-then rules \parencite{kosko-fat1994}.
A uniform approximation allows the user to pick the approximation error-tolerance level in advance for all inputs.
Their rule bases adapt using both unsupervised and supervised learning laws \parencite{kosko-fuzeng, osoba-mitaim-kosko-SMC2011}.

A FCM tends to have many cycles or closed loops in its fuzzy directed graph.
These cycles directly model causal feedback from self-loops to multi-path causality.
The cycles also produce complex nonlinear dynamics.
FCM causal inference maps input states or policies to equilibria of the nonlinear dynamical system.
Users can also step through time slices of the FCM dynamical system to at least partially unfold the system in time.

A FCM's feedback structure contrasts with the acyclic structure of Bayesian belief networks (BBNs).
BBNs form the basis of Pearl's popular model of causal inference~\parencite{pearl2009causality}.
Rubin's counterfactual approach to causality is a related statistical model \parencite{rubin2005, imbens2015causal}.
BBNs for causal inference assign probabilities to a directed \emph{acyclic} graph (DAG).
Their acyclic causal tree structure rules out feedback pathways.
This strong acyclic assumption greatly simplifies the probability calculus on such digraphs and may permit finer control when propagating probabilistic beliefs.
It allows the sum-product algorithm to compute marginal node probabilities from a known joint probability distribution on all the nodes.
FCMs do not produce such probability estimates.
But the acyclic structure is hard to reconcile with the inherent and extensive feedback causality of large-scale social phenomena from social networks to state-versus-state wars.
These social systems are high-dimensional nonlinear dynamical systems.
They have dynamics because they have feedback loops.

The loop-free structure of DAGs also makes it hard to combine causal models from multiple experts.
Combining DAGs need not produce a new DAG.
So combining DAGs is not a closed graph-theoretical operation in general.
Some experts will tend to draw opposing causal arrows between nodes.
Others will tend to add links that create multi-node closed loops.
This helps explain why BBNs and tree-based expert systems often have relatively few nodes.
A compounding factor is the sheer computational complexity of belief propagation.

FCMs naturally combine into a new FCM.
So FCM combination is a closed graph-theoretical operation.
The user can combine any number of FCMs by adding their underlying augmented adjacency matrices.
This gives a simple and powerful way to perform expert knowledge fusion.
Such knowledge fusion is a key function in many defense and intelligence decision-making processes~{\parencite{davis-rand2015, davis-wsc2015, kosko1986combination, kosko-hidden1988,  taber1991,taber-yager-helgason2007}}.
Figure 3 shows how two weighted medical FCMs combine into a fused FCM.
It shows the minimal fusion case of combining two FCMs with overlapping concept nodes.
The fused FCM is a weighted average or probability mixture of the combined FCMs.
Averaging the binary augmented adjacency matrices of ordinary DAGs always produces a FCM.

The strong law of large numbers shows that in many cases this fused FCM converges with probability one to the population FCM of the sampled FCMs  \parencite{taber-yager-helgason2007}.
This result holds formally if the FCM edge values from the combined experts approximate a statistical random sample with finite variance.
A random sample is sufficient for this convergence result but not necessary.
A combined FCM may still give a representative knowledge base when the expert responses are somewhat correlated or when the experts do not all have the same level of expertise.
Users can also construct FCMs from written sources such as policy articles or books or legal testimony.
They can also use statistical learning algorithms to grow FCMs from sample data.
FCMs can in this way address the growing representational problems of big data and what we have called ``big knowledge'' \parencite{fcm-papageorgiou2013}.

FCM concept nodes can also be part of some other FCM or of some other nonlinear system.
Feedforward fuzzy rule-based systems can also model the input-output structure of a concept node just as they can model a single causal edge that connects one concept node to another.
These fuzzy systems are uniform function approximators if they use enough if-then rules \parencite{kosko-fat1994}.
We can also generalize the causal edge values to equal the outputs of such fuzzy rule-based systems or generalized probability mixtures \parencite{kosko2017generalized,kosko2018additive}.
We can further put a probabilistic structure on top of the fuzzy causal structure.
This chapter looks only at the fuzzy causal structure.

Inference on a graph maps nodes to nodes.
Forward inference maps observable \emph{evidence} nodes or variables $C_{Ev}$ to \emph{output} nodes or variables $C_O$.
Backward inference maps output nodes back to input nodes.
Probabilistic inference on a graph computes the conditional probability $P(C_O|C_{Ev})$ of output nodes $C_O$ given a state vector on observable evidence nodes $C_{Ev}$.
This computation involves a complex marginalization operation on general DAGs.
It also assumes that the user knows the closed-form joint probability distribution on all the nodes.
The computation often requires complex message-passing algorithms such as belief propagation {\parencite{yedidia2001, murphy2002}} or the more general junction-tree algorithm {\parencite{wainwright-jordan2008}}).
This probabilistic inference or computation is NP-hard in general~{\parencite{dagum-luby1993,russell-norvig2016}}.
And the problem of learning the underlying Bayesian network from data is an NP-complete problem in general~{\parencite{chickering1996,chickering2004}}.

Loops or cycles in a causal graph model may render exact probabilistic inference difficult if it is even feasible.
Inexact or \emph{loopy} inference schemes can give useful approximations in many cases.
These methods include loopy belief propagation, variational Bayesian methods, mean-field methods, and some forms of Markov-chain Monte Carlo simulation~{\parencite{pearl2014,wainwright-jordan2008,beal-ghahramani2003, beal-ghahramani2006}}.
But loopy algorithms need not converge.
Nor does their use overcome the NP-hard complexity of probabilistic belief propagation.
It may instead only compound the computational complexity.

FCM forward inference uses light computation.
It requires only vector-matrix multiplication and nonlinear transformations of vectors.
The transformations are often hard or soft thresholds.
So FCM forward inference has only polynomial-time complexity.
This means that FCMs scale fairly well to problems with high dimension or multiple concept nodes.
Simple binary-state FCMs converge quickly to limit-cycle equilibria given an input stimulus {\parencite{kosko-nnfs}}.
FCMs with more complex node nonlinearities can converge to aperiodic or chaotic equilibria if sufficiently complex nonlinearities describe the concept nodes.

FCMs do have at least two structural limitations.
The first is that a user may not be able to use some predicted outcomes.
A user may find it hard to interpret a FCM's what-if predictions because they are equilibria of a highly nonlinear dynamical system.
Simple predictions may be limit cycles that consist of only a few ordered system states.
The dolphin example below is one such case.
One equilibrium output consists of four binary vectors that repeat in sequence.
Other bit-vector limit cycles can consist of much longer ordered sequences.
Richer dynamics can produce aperiodic or chaotic equilibria in regions of the FCM state space.
Such predicted outcomes may have no clear policy interpretation if we cannot clearly associate the equilibrium attractor's region of the FCM state space with a temporally ordered sequence of FCM states.

A more fundamental limitation is that FCMs do not easily answer why questions given observed outcomes.
FCMs do not easily admit backward inference from effects to causes because FCM nodes are neural-like nonlinear mappings of causal inputs to outputs.
A user may need to test a wide range of random inputs to see which FCM states map to or near a given observed or conjectured output state.
We often check all possible input policy states to find all output equilibria.
We did this with the Thucydides-trap FCM below by ``clamping on'' input policy variables while the FCM dynamical system converged.
We also tested this FCM's pattern predictions against those of a thresholded FCM whose edge values were the trivalent extremes of $-1, 0, \text{or } 1$.
The thresholded FCM gave similar equilibrium predictions.

The next sections explore FCMs in some detail.
We conclude with two FCM policy applications.
The first shows how FCMs can assist in modeling the many causal and policy factors involved in public support for insurgency and terrorism.
The second shows how a FCM model can give insight into Allison's recent ``Thucycides trap'' model of US-China conflict.

\section{Probabilistic vs. Fuzzy Models of Causality}

Causality is uncertain in general.
Different uncertainty model tend to produce different causal models.
Probability and fuzz or fuzziness are two such different types of uncertainty.
Probability or randomness describes whether an event occurs.
Fuzz or vagueness describes the degree to which an event occurs.
These two types of uncertainty produce different causal models even though both describe uncertainty with numbers in the unit interval $[0, 1]$.

Randomness and fuzz can also combine in their descriptions.
This takes some care even at the level of natural language.
The statement that ``There is a 20\% chance of light rain tomorrow'' asserts the random occurrence of the fuzzy event of light rain.
All rain is both light rain and not light rain to some degree.
It is a matter of degree absent a binary definition that literally specifies which rain drop is light rain and which is not.
Zadeh first showed in 1968 how to combine these two distinct types of uncertainty and in this order.
The probability of a fuzzy event is just the expected value of a measurable multivalued indicator function\parencite{zadeh1968}.

A different juxtaposition of uncertainty types holds for the statement that ``The probability of rain tomorrow is low.''
It asserts a fuzzy probability rather than the probability of a fuzzy event.
Its meaning also requires a fuzzy-set definition of low versus non-low probability.
Such combinations can occur in digraph models of causality.
This chapter focuses on the simpler cases of purely probabilistic causality and purely fuzzy causality.
It focuses on how these uncertainty models view the directed causal edge $e_{ij}$ from concept node $C_i$ to concept node $C_j$.
The FCM view of causality is simply that causality is what a fuzzy signed directed edge measures.
Active causality is the flow of concept-node activation or influence from node to node through an intervening directed causal edge.

A probabilistic view might cast the directed causal edge value $e_{ij}$ as the conditional probability $P(C_j | C_i)$ that $C_j$ occurs given that $C_i$ occurs.
An immediate problem is that $e_{ij}$ takes on negative values in the bipolar interval $[-1, 1]$ to indicate causal decrease.
There is a simple but somewhat costly way to address this.
The original FCM paper \parencite{kosko1986} showed how to introduce $n$ companion \emph{dis-concepts} to keep all causal edge values nonnegative and thus how to convert causal decrease into causal increase:  ``Extreme terrorism decreases government stability'' holds just in case ``Extreme terrorism increases government instability'' holds.
So dis-concepts negate the noun and not the adjective that modifies it.
Using dis-concepts doubles the number of concept nodes and expands the edge matrix $E$ to a $2n \times 2n$ matrix.
The technique does preserve more causal structure when combining multiple FCMs because then two combined edges of opposite polarity do not cancel each other out if they have the same magnitude.

There are two structural problems with viewing the directed (positive) edge $e_{ij}$ as the conditional probability $P(C_j | C_i)$.
The first problem is that conditional probability is not transitive.
This is telling since both logical and causal implication are transitive (at least on most reckonings):  $A$ causes $C$ if $A$ causes $B$ and if $B$ causes $C$.
But the transitive equality $P(C|A) = P(B|A) P(C|B)$ does not hold in general.
A simple counter-example takes any two disjoint or mutually exclusive events $A$ and $B$ with positive joint probabilities $P(A \cap C) > 0$ and $P(B \cap C) > 0$ if all three set events have positive probability.
Then $P(C|A) > 0$ but $P(B|A) P(C|B) = 0$ because $P(B|A) = \frac{P(A \cap B)}{P(A)} = 0$ since $A \cap B = \emptyset$.
An even starker counter-example results if $A \subset C$ because then $P(C|A) = 1$ while $P(B|A) P(C|B) = 0$.

The second and deeper problem with a probability interpretation of $e_{ij}$ is that it collides with the Lewis Impossibility Theorem \parencite{lewis1976, lewis1986}.
This triviality result and its progeny show that we cannot in general equate the probability $P(A \rightarrow B)$ of the logical if-then conditional $A \rightarrow B$ with the conditional probability $P(B|A)$.
The equality $P(A \rightarrow B) = P(B|A)$ holds only in the trivial case when $A$ and $B$ are independent and thus when there is no conditional relationship at all.
So a probabilistic transitive equality of the form $P(A \rightarrow C) = P(A \rightarrow B) P(B \rightarrow C)$ lacks a formal foundation in general.
One approach is to replace conditional probability with a more general probable equivalence relation.
This gives upper and lower conditional probabilities based on the general inequality that $P(A) P(B) \ge P(A \cap B) P(A \cup B)$ \parencite{kosko2004} because then $P(B|A) \le Q(B|A)$ if $Q(B|A) = \frac{P(B)}{P(A U B)}$.
The resulting conditioning interval does not directly address the basic prohibition that lies behind Lewis's triviality theorem.
So a meta-level heuristic may be the best we can make of probabilistic interpretations of the directed edge $e_{ij}$.

We stress that there are many ways to combine fuzz and probability in a FCM.
A system-level example is the averaging technique we discuss below on FCM knowledge combination.
The averaging computes a probability mixture of augmented signed fuzzy causal-edge matrices.
The law of large numbers can also apply to such averages in many cases.

Probability and fuzz can also apply at the level of the lone directed edge $e_{ij}$.

We can view the causal influence $e_{ij}$ from concept node $C_i$ to concept node $C_j$ as a \emph{mapping} $f$ from $C_i$ values to $C_j$.
The edge $e_{ij}$ maps more generally from all $n$ nodes to $C_j$.
Then a fuzzy system $F: X \rightarrow Y$ of $m$ fuzzy if-then rules can approximate any such mapping $f$.
The rules have the form ``If $X = A_j$ then $Y = B_j$'' where $A_j$ is a fuzzy subset of the input space and $B_j$ is a fuzzy subset of the output space.
The fuzzy system $F$ can suffer rule explosion when its input space has dimension.
A Watkins representation can exactly represent $f$ with just two rules if the real function $f$ is bounded and not constant.
Fuzzy rules can also absorb or approximate the prior, likelihood, and posterior probability density functions of modern Bayesian inference \parencite{osoba-mitaim-kosko-SMC2011}.
We note that fuzzy systems admit a complete probabilistic description in terms of mixtures of probability curves \parencite{kosko2017generalized, kosko2018additive}.
The rules roughly correspond to the mixed probability curves.
So a rule base of $m$ if-then fuzzy rules gives rise to a generalized probability mixture $p(y|x) = p_1(x) p_{B_1}(y|x) + \cdots p_m(x) p_{B_m}(y|x)$ where the generalized mixing weights $p_j(x)$ are nonnegative and sum to unity for each input $x$ and where the $m$ likelihood functions $p_{B_j}(y|x)$ also depend on $x$ in general.
The generalized mixture gives ia generalized version of the theorem on total probability.
So it gives at once a Bayes theorem that describes which rules fired to which degree for any given input and output.
The simplest case results when the fuzzy system $F$ has just one rule.
Then $F$ reduces to the constant function $f(x) = e_{ij}$.

We next show how to define a FCM and use it to make causal inferences.

\section{Forward Inference with Fuzzy Cognitive Maps} \label{sec:fcm-inf}

FCMs are fuzzy signed directed graphs that describe degrees of causality and webs of causal feedback~\parencite{kosko-nnfs, fcm-papageorgiou2013}.
Most FCMs have cycles or closed loops that model causal feedback.
FCMs can be acyclic and thus define directed trees.
This is rare in practice and implies that such a FCM has no feedback dynamics.

FCMs are fuzzy because their nodes and edges can be multivalent and so need not be binary or bivalent.
We now develop this fuzzy structure and apply it to FCMs.
A property or concept is fuzzy if it admits degrees and is not just black and white \parencite{zadeh1965, zadeh1973} %, kosko1993}.
Then the property or concept has borders that are gray and not sharp or binary.
This formally means that a subset $A$ of a space $X$ is properly fuzzy if and only if at least one element $x \in X$ belongs to $A$ to a degree other than $0$ or $1$.
Then the set $A$ breaks the so-called ``law'' of contradiction because then $A \cap A^c \neq \emptyset$ holds where $A^c$ is the complement set of $A$.
The set $A$ equivalently breaks the dual ``law'' of excluded middle because then $A \cup A^c \neq X$ holds.
Equality holds in these two ``laws'' just in case $A$ is an ordinary bivalent set.

A FCM concept node is fuzzy in general because it can take values in the unit interval $[0, 1]$.
So its values over time define a fuzzy set.
This implies that a concept node that describes a survival threat or any other property or policy both occurs and does not occur to some degree at the same time.
It cannot both occur $100\%$ and not occur $100\%$ at the same time.
The two percentages must sum to $100\%$.
Nor again does this preclude applying a probability measure to a concept node or fuzzy set.
The probability of a fuzzy event combines the two distinct uncertainty types of randomness and vagueness or fuzz (and formally involves taking the expectation of a measurable fuzzy indicator function \parencite{zadeh1968}).
So it makes sense to speak of the probability of a partial survival threat.
This differs from the compound uncertainty of a fuzzy probability such as the statement that the survival-threat probability is low or very high.
This chapter works only with fuzzy concept values

A directed causal edge $e_{ij}$ is also fuzzy because in general it takes on a continuum of values.
The edge can also have a positive or negative sign.
So it takes values in the bipolar interval $[-1, 1]$.
The use of ``dis-concepts'' can convert all negative causal edges into positive edge values \parencite{kosko1986}.

A simple FCM consists of $n$ concept nodes $C_j$ and $n^2$ directed fuzzy causal edges $e_{ij}$.
The $n$ concept nodes $C_1, C_2, \ldots , C_n$ are nonlinear and represent variable concepts or factors in a causal system.
They are nonlinear in how they convert their inputs to outputs.
The concept nodes can define concepts or social patterns that increase or decrease such as political instability or jihadi radicalism.
Or they can describe policies or control variables that increase or decrease such as weapons spending or foreign investment in a country.
The very first FCM published \parencite{kosko1986} dealt with concepts related to Middle East stability such as Islamic fundamentalism and Soviet imperialism and the strength of the Lebanese government.
The author based this first FCM on a 1982 newspaper editorial from political analyst Henry Kissinger titled ``Starting Out in the Direction of Middle East Peace.''

A concept node's occurrence or activation value $C_i(t_k)$ measures the degree to which the concept $C_i$ occurs in the causal web at time $t_k$.
It can also reflect the degree to which it is true that the $i$th node fires or appears in a given snapshot of the causal web at time $t_k$.
The FCM state vector $\mathbf{C}(t_k)$ gives a snapshot of the FCM system at time $t_k$.
The FCM edges themselves may be the weighted average of several experts as we explain below.

A FCM model must specify the nonlinear dynamics of the $n$ concept nodes \\
$C_1, C_2, \ldots , C_n$.
It must also specify the $n^2$ directed and signed causal edge values $e_{ij}$ that connect the $i$th concept node $C_i$ to $C_j$.
The edges can be time-varying functions in more general FCMs.

We start with the nonlinear structure of the concept nodes.
The $j$th concept node $C_j$ depends at time $t_k$ on a scalar input $x_j(t_k)$ that weights and aggregates all the in-flowing causal activation to $C_j$.
Then some nonlinear function $\Phi_j$ converts $x_j(t_k)$ into the concept node's new state $C_j(t_{k+1})$ at the next discrete time $t_{k+1}$.
The FCM literature explores discrete and continuous node models with a wide variety of nonlinearities and time lags \parencite{glykas2010,  fcm-papageorgiou2013}.
We present here the simplest case of a discrete FCM where each node's current state depends on an edge-weighted inner product of the node activity:
\begin{align}
	C_j(t_{k+1})  =  \Phi_j \left( \sum_{i=1}^n C_i(t_k) \; e_{ij}(t_k)  \;  +  \;  I_j(t_k) \right)
	\label{eq:node-nonlinearity}
\end{align}
where $I_j(t_k)$ is some external or exogenous forcing value or input at time $t_k$.
The simplest nonlinear function $\Phi_j$ is a hard threshold that produces bivalent or on-off concept node values:
 \begin{equation} \label{eq:threshold-node}
   	C_j(t_{k+1})     =     \begin{cases}
      0\ & \text{\it if  \;    $ \sum_{i=1}^n C_i(t_k) \; e_{ij}(t_k)  \;  +  \;  I_j(t_k)                 \leq  0$}\\
      1\ & \text{\it if  \;   $  \sum_{i=1}^n C_i(t_k) \; e_{ij}(t_k)  \;  +  \;  I_j(t_k)      >$   0}\
   \end{cases}
\end{equation}
for a zero threshold value.

Fixing the input $I_j$ as some very large positive (or negative) value ensures that $C_j$ stays on (or off) during an inference cycle.
We call this ``clamping'' on (or off) the $j$th concept node $C_j$.
We clamp one or more concept nodes to test a given policy or forcing scenario.
Clamping is the only way to drive policy or other nodes that have no causal fan-in from other concept nodes.
We show below how to model the sustained presence of a shark in the dolphin FCM of Figure \ref{fig:herd-dyn-ex}  by clamping on the fourth concept node.

Continuous-valued concept nodes often use a monotone increasing $\Phi_j$ nonlinearity such as the logistic sigmoid function.
But $\Phi_j$ can also be nonmonotonic.
This happens if it is a Gaussian or Cauchy probability density function.
It can also be multimodal by forming a mixture of such unimodal probability curves.
Then the Expectation-Maximization algorithm can tune the mixture parameters based on numerical training data \parencite{osoba2016noisy}.
Almost all concept nodes are monotonically nondecreasing in the FCM literature.
The causal-influence theorem below holds for such activation functions $\Phi_j$.

The logistic causal activation gives a soft threshold that approximates the hard threshold in (\ref{eq:threshold-node}) if the shape parameter $c > 0$ is large enough:
\begin{align}
	C_j(t_{k+1})   =  \frac{1}{1+\exp\Big{(}- c \sum_{i=1}^n C_i(t_k) \; e_{ij}(t_k)  \;  -  \;c  I_j(t_k)    \Big{)}}  \;. \label{eq:logistic-node}
\end{align}
The first graph in Figure 3 of \parencite{osoba-kosko2017} shows a logistic function and its sigmoidal or soft-threshold shape. %The other five graphs show other common causal activation functions.
Logistic units are popular in causal and neural-network learning algorithms because they smoothly approximate the on-off behavior of threshold units and still have a simple partial derivative of the form
\begin{align}
	\frac{\partial C_j(x)}{\partial x} = c \; C_j (1  -  C_j) > 0 \label{eq:logistic-factor}
\end{align}
if
\begin{align}
	C_j(x)   =  \frac{1}{1+\exp(- c x)}  \label{eq:logistic-derivative}
\end{align}
for scaling constant $c > 0$.
The positive derivative in (\ref{eq:logistic-factor}) greatly simplifies many learning algorithms.
We will also use it below to show the transitive product effect of the edges $e_{ij}$ in a causal inference.

We turn next to the causal edge values $e_{ij}$.
These values are constants during most FCM inferences.
Section 6 below shows how a version of the differential Hebbian learning law can learn and tune these causal edge values from time-series data.

The causal edge value $e_{ij}(t_k)$ in (\ref{eq:node-nonlinearity}) measures the degree that concept node $C_i$ causes concept node $C_j$ at time $t_k$:
\begin{align}
	e_{ij}  \; = \;  Degree \left(C_i \rightarrow C_j \right) \;.
\end{align}
\
These $n^2$ causal edge values define the FCM's $n \times n$ fuzzy adjacency matrix or causal edge matrix $\mathbf{E}$.
The $i$th row lists the causal edge values $e_{i1}, e_{i2}, \ldots , e_{in}$ that flow out from $C_i$ to the other concept nodes (including to itself).
The $j$th column lists the causal edge values $e_{1j}, e_{2j}, \ldots , e_{nj}$ that flow into $C_j$ from the other concept nodes.
So the $i$th row defines the causal \emph{fan out} vector of concept node $C_j$.
The $j$th column defines the causal \emph{fan in} vector of $C_j$.
The matrix diagonal lists any causal self-excitation of the $n$ concept nodes.

We can also interpret $e_{ij}$ in terms of fuzzy subsethood \parencite{kosko-fuzzyentropy1986, kosko2004}).
Then $e_{ij}$ states the degree to which the fuzzy concept set $C_i$ is a fuzzy or partial subset of fuzzy concept set $C_j$ \parencite{kosko1986}.
This abstract framework implies that the edge value $e_{ij}$ is the degree to which the fuzzy concept set $C_i$ belongs to the fuzzy power set of fuzzy set $C_j$.
This is only one interpretation of causal degrees.
A FCM simply models causality as a fuzzy signed directed edge.

Figure \ref{fig:herd-dyn-ex} shows a FCM fragment that models an undersea causal web of dolphins in the presence of sharks or other survival threats \parencite{dickerson-kosko1994}.
The next section shows how to make what-if inferences or predictions with this simple FCM that has binary concept nodes and trivalent causal edges.
The inference process uses only vector-matrix multiplication and thresholding.
More complex FCMs can activate concept nodes with nonlinear functions or with many other monotonic or nonmonotonic functions.
A causal learning law can approximate the causal edge values given time-series data of the concept nodes.

\begin{figure}
\centering
\includegraphics[height=0.25\textheight]{./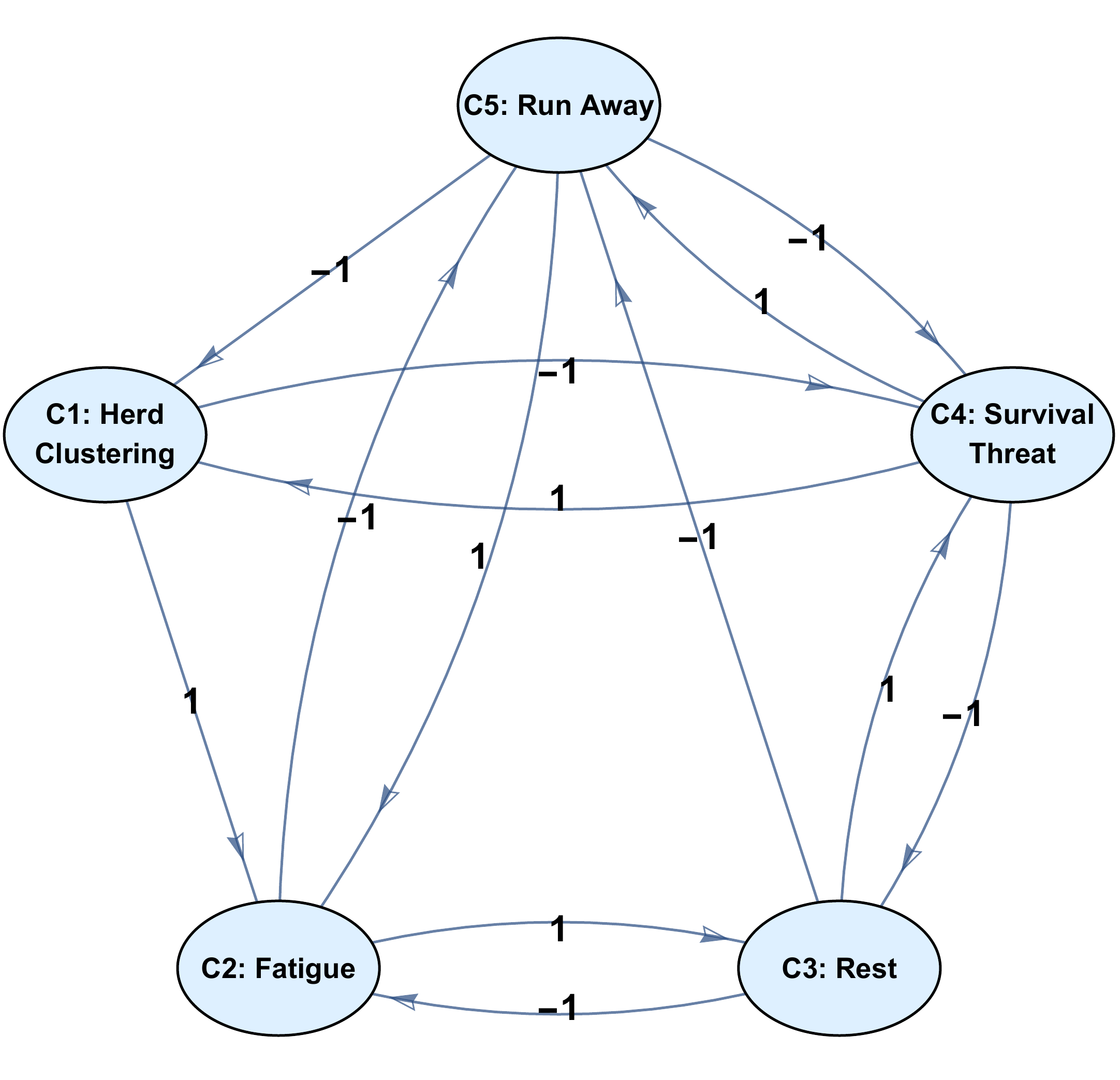}
\caption{
Fragment of a predator-prey fuzzy cognitive map that describes dolphin behavior in the presence of sharks or other survival threats \parencite{dickerson-kosko1994}. \\
\textbf{Figure Note}: The FCM itself is a fuzzy signed directed graph with feedback.
The concept nodes of the digraph represent fuzzy sets that activate to varying degrees of concept-node occurrence.
The edges denote fuzzy or partial causal dependence between concept nodes.
The edges in this FCM are trivalent:  $e_{ij} \in \{-1, 0, 1\}$.
Each nonzero edge defines a causal if-then rule:  The dolphin pod decreases its resting behavior if a shark or other survival threat is present.
But the survival threat increases if the pod rests more.
These two causal links define a minimal cycle or feedback loop within the FCM's causal web.
Such feedback cycles endow the FCM with transient and equilibrium dynamics.
All inputs produce equilibrium limit cycles or fixed-point attractors in the simplest case when all nodes are bivalent threshold functions and when the system updates all nodes at each iteration.
}
\label{fig:herd-dyn-ex}
\end{figure}

\subsection{FCM Dynamics for Binary Concept States} \label{sec:fcm-dyn}

We first illustrate FCM inference with the simple but common case of binary or on-off concept nodes.
FCM dynamics depend on the FCM's nonlinear feedback structure.
 A FCM's feedback loops model interwoven causal relationships and can produce rich and predictive equilibrium dynamics \parencite{kosko-hidden1988,  kosko-nnfs}.
These causal equilibria define ``hidden patterns'' \parencite{kosko-hidden1988} in the often inscrutable web of edges and nodes.
 FCMs with binary concepts  produce either limit-cycle equilibria or simple fixed-point attractors.
 A limit cycle is an ordered sequence of FCM states that repeats.
 A fixed point is a limit cycle of length one.
Properly fuzzy concept nodes can in principle produce more exotic equilibria such as limit tori or chaotic attractors.
See \parencite{hirsch2012} for the formal definition of these dynamical equilibria.

The long-run evolution of the FCM state vector $\mathbf{C}$
\begin{align}
\lim_{t \rightarrow \infty}\mathbf{C}(t)
\end{align}
 depends on the initial state $\mathbf{C}(0)$ as well as on the nonlinear structure of the concept nodes and the structure of the FCM causal edge matrix $\mathbf{E}$.
Simple two-state or binary-node  FCMs converge either to a fixed-point attractor
\begin{align}
	\mathbf{C}^*(t+1) = \Phi( \mathbf{C}^*(t) \; \mathbf{E})
\end{align}
or to a limit cycle of repeating bit vectors.
This convergence assumes synchronous updating of all the concept nodes at each time step.
This stability or convergence guarantee for binary-node FCMs follows from the local result that every square connection matrix  is temporally stable~\parencite{kosko-nnfs}.

Consider again the FCM fragment in Figure \ref{fig:herd-dyn-ex} that describes some of the predator-prey behavior of a dolphin pod in the presence of sharks or other survival threats \parencite{dickerson-kosko1994}.
The concept nodes are binary with threshold activations that obey (\ref{eq:threshold-node}).
Bivalent nodes simplify the dynamical analysis because updating all $n$ nodes at the same time must lead to either a fixed-point attractor or a limit-cycle of bit vectors.

Inference requires a causal edge matrix $\mathbf{E}$.
So we start with the edge matrix that underlies the dolphin FCM in Figure \ref{fig:herd-dyn-ex}.

The edges in the dolphin FCM fragment in Figure \ref{fig:herd-dyn-ex}  are trivalent:  $e_{ij} \in \{-1,0, 1\}$.
So an edge describes maximal causal increase ($e_{ij} = 1$) or maximal causal decrease ($e_{ij} = -1$) or there is no causal relationship at all ($e_{ij} = 0$).
The causal edge adjacency matrix $\mathbf{E}$ for the FCM in Figure~\ref{fig:herd-dyn-ex} is a 5-by-5 trivalent matrix:

\begin{align}
\mathbf{E}=\left(
\begin{array}{cccccc}
  & C_1 & C_2 & C_3 & C_4 & C_5 \\
 C_1 & 0 & 1 & 0 & -1 & 0 \\
 C_2 & 0 & 0 & 1 & 0 & -1 \\
 C_3 & 0 & -1 & 0 & 1 & -1 \\
 C_4 & 1 & 0 & -1 & 0 & 1 \\
 C_5 & -1 & 1 & 0 & -1 & 0 \\
\end{array}
\right)
\label{eq:fcm-mtx-ex}
\end{align}

 A key argument for using trivalent edge weights $e_{ij}$ in $\{-1, 0, 1\}$ here and elsewhere is that experts may find it hard to accurately state a graded measure of causal intensity $e_{ij} \in [-1,1]$ for a causal dependence.
 It is often much easier to elicit just sign values from experts than real-valued magnitudes.
Taber et al.~\parencite{taber-yager-helgason2007} refer to this difficulty as the expert's \emph{articulation burden}.
Real-valued magnitudes also tend to be less reliable.
Experts are far more likely to agree on edge signs than on both signs and magnitudes.
Even the same expert may state different edge-value magnitudes at different times.
This articulation burden motivates averaging the trivalent-edge-valued FCMs of experts to approximate the unknown population FCM.
We also studied a thresholded version of the Thucydides-trap FCM below to gauge the robustness of the corresponding FCM with fractional edge values.
Thresholding produced  a trivalent FCM.

The stochastic convergence result in the appendix of \parencite{taber-yager-helgason2007} shows that averaging FCMs with trivalent edges approximates the underlying population FCM that has real edge values.
FCM sample averages converge with probability one to the population average in accord with the strong law of large numbers.
The underlying limit-cycle structure of the averaged FCM also appears to approximate the limit-cycle structure of the original or population FCM if the concept nodes are binary.
The limit-cycle results in \parencite{taber-yager-helgason2007} are only preliminary simulations.

We now show how a limit-cycle hidden pattern occurs in the dolphin FCM in Figure 1.
Suppose that a shark appears at time $t = 0$.
Then the fourth or survival-threat concept node occurs or turns on.
We can represent this initial state $\mathbf{C}(0)$ of the FCM with the unit bit vector
\begin{equation}
	\mathbf{C}(0) = (0, 0, 0, 1, 0)\;.
\end{equation}
Each of the 5 concept nodes acts as a threshold function with zero threshold as in (\ref{eq:threshold-node}).
So $C_k(t) = 1$ if and only if its total inner-product input $x$ is positive:  $x > 0$.
It otherwise equals zero and thus turns off or stays off if it is not active.
Then a forward inference gives the following sequence of FCM state vectors:

\[
\begin{array}{ll}
\mathbf{C}(0) \mathbf{E} = (1, 0, -1, 0, 1)  &\longrightarrow (1, 0, 0, 0, 1) = \mathbf{C}(1) \\
\mathbf{C}(1) \mathbf{E} = (-1, 2, 0, -2, 0)  &\longrightarrow (0, 1, 0, 0, 0) = \mathbf{C}(2) \\
\mathbf{C}(2) \mathbf{E} = (0, 0, 1, 0, -1)  &\longrightarrow (0, 0, 1, 0, 0) = \mathbf{C}(3) \\
\mathbf{C}(3) \mathbf{E} = (0, -1, 0, 1, -1)  &\longrightarrow (0, 0, 0, 1, 0) = \mathbf{C}(0) \;.
\end{array}
\]

This inference sequence defines an equilibrium 4-step limit cycle because the next state vector $\mathbf{C}(4) = (0, 0, 0, 1, 0)$  is just the first state vector $\mathbf{C}(0)$.
So the FCM equilibrium or hidden pattern is the indefinitely repeating cycle $\mathbf{C}(0) \rightarrow \mathbf{C}(1)  \rightarrow \mathbf{C}(2) \rightarrow \mathbf{C}(3) \rightarrow \mathbf{C}(0) \rightarrow \cdots$.
This cycle defines the equivalent cycle of bit vectors $(0, 0, 0, 1, 0)  \rightarrow (1, 0, 0, 0, 1)  \rightarrow (0, 1, 0, 0, 0) \rightarrow (0, 0, 1, 0, 0) \rightarrow (0, 0, 0, 1, 0) \rightarrow \cdots$.
The repeating cycle predicts a predator-prey oscillation:  The shark threat appears.
Then the threatened dolphin pod clusters and runs away.
Then the dolphins get tired.
Then they rest.
But the resting dolphins then attract a shark and so on.
This limit cycle can model an incidental appearance of a shark.

Suppose instead that a shark appears and actively pursues the dolphins.
We can model this what-if policy scenario by clamping the fourth node on during each update.
This again amounts to adding a large positive input value for $I_4$ in (\ref{eq:node-nonlinearity}).
Clamping leads to two transient bit-vector states and then a stable 3-step equilibrium limit cycle:
\[
\begin{array}{ll}
\mathbf{C}(0) \mathbf{E} = (1, 0, -1, 0, 1)  &\longrightarrow (1, 0, 0, 1, 1) = \mathbf{C}(1) \; \text{  since we keep $C_4 = 1$ throughout.} \\
\mathbf{C}(1) \mathbf{E} = (0, 2, -1, -2, 1)  &\longrightarrow  (0, 1, 0, 1, 1) = \mathbf{C}(2) \\
\mathbf{C}(2) \mathbf{E} = (0, 1, 0, -1, 0)  &\longrightarrow (0, 1, 0, 1, 0) = \mathbf{C}(3) \\
\mathbf{C}(3) \mathbf{E} = (1, 0, 0, 0, 0)  &\longrightarrow (1, 0, 0, 1, 0) = \mathbf{C}(4) \\
\mathbf{C}(4) \mathbf{E} = (1, 1, -1, -1, 1)  &\longrightarrow  (1, 1, 0, 1, 1) = \mathbf{C}(5) \\
\mathbf{C}(5) \mathbf{E} = (0, 2, 0, -2, 0)  &\longrightarrow (0, 1, 0, 1, 0) = \mathbf{C}(3) \;.
\end{array}
\]

The equilibrium 3-step limit cycle is $\mathbf{C}(3) \rightarrow \mathbf{C}(4)  \rightarrow \mathbf{C}(5) \rightarrow \mathbf{C}(3) \rightarrow  \cdots$ or $(0, 1, 0, 1, 0)  \rightarrow (1, 0, 0, 1, 0)  \rightarrow (1, 1, 0, 1, 1)  \rightarrow (0, 1, 0, 1, 0)   \rightarrow \cdots$.
The limit cycle defines and thus predicts a different form of predator-prey behavior:  The shark tires the dolphin pod.
The dolphins cluster in a safety maneuver.
They then try to rest and still run away as they fatigue.
The shark does not relent and the dolphins fatigue and so on.

We can formally describe the forward spread of causal activation in a FCM through the tools of the differential calculus.
The Appendix treats the important special case where the concept nodes are soft thresholds or other differentiable functions of their inputs.
The main theorem confirms that FCM causal activation is transitive.

We next show how to combine any number of FCMs into a common FCM knowledge base through an averaging or mixing process.
The result is always some causal edge matrix $E$.
Then we will return to this fixed-matrix case and show how forward causal inference proceeds when the causal concept nodes are smoothly differentiable functions of their inputs.
We will then show how time-differentiable causal edges can define causal learning laws.

%\begin{figure*}
%\centering
%\includegraphics[width=0.75\textwidth, keepaspectratio]{./figs/actvn-fxns.pdf}
%\caption{Six types of FCM nonlinear concept-node occurrence or activation functions: sigmoid logistic, sigmoid  hyperbolic tangent, arctangent, linear, step function, and the delayed step.
% Each causal concept-node function maps into the unit interval $[0, 1]$ and gives the degree to which the concept or policy occurs at a given moment in the causal web.
%Simulations used the logistic,  linear, or step function.}
%\label{fig:actvns}
%\end{figure*}

\section{Combining Causal Knowledge:  Averaging Edge Matrices} \label{sec:fcm-comb}

Causal modeling faces a threshold epistemic question when dealing with multiple experts:  How do we combine the causal models of multiple knowledge sources?

A common answer avoids the question by combining the causal knowledge or expertise before it enters a causal model.
Some form of this knowledge preprocessing occurs with AI search trees and other DAG models.
Multiple knowledge sources may lead a knowledge engineer to draw or otherwise modify a weighted causal arrow in a model.
That differs from first letting each source have its own causal arrow and then combining.
This fit-all-in-one-model approach may work well for problems of small dimension or small expert sample size.
Even then it may obscure the disparate knowledge that went into the representation.
But it can ensure that a causal DAG stays a DAG as it encodes new information.
The approach can become more ad hoc and restrictive as the expert sample size $m$ grows.
The likelihood of getting a causal cycle only increases with expert sample size and the node count of the model.

FCMs answer the epistimic question directly:  Average the causal FCMs of each expert \parencite{kosko-hidden1988, kosko-nnfs, taber1991}.
Preprocessing can still occur.
But there is no limit to the number $m$ of FCMs that averaging can combine.
The result is always a FCM and one with all the representative properties of a sample average.
This numerical result holds even though experts may express their knowledge solely in words.
Laws of large numbers can apply directly or partially based on how well the $m$ expert's FCM sample approximates a random sample.

The FCM average forms a mixture or convex combination of the causal edges.
A group of $m$ experts can each produce an FCM causal edge matrix $\mathbf{E}_k$ that describes some fixed problem domain.
Each expert can model different concept and policy nodes.
The total number of nodes is $n$.
Augment the edge matrices with zero rows and columns for any missing nodes in an expert's causal edge matrix.
Then FCM knowledge fusion or combination takes the weighted average of their augmented causal edge matrices:
\begin{align}
 \bar{\mathbf{E}}_m = \sum_{k=1}^m  w_k \mathbf{E}_k
 \label{eq:fcm-sum}
\end{align}
where the weights $w_k$ are convex weights and hence nonnegative and sum to one.

The weights $w_k$ can reflect relative expert credibility in the problem domain.
They can reflect test scores or subjective rankings or some other measure of the experts' predictive accuracy in prior experiments.
The same weight $w_k$ need not apply to the entire $k$th FCM edge matrix.
Each edge value can have its own weight.
So a weight matrix $W_k$ corresponds to each expert's FCM edge matrix.
Predd et al.~\parencite{predd-et-al2008} developed a method for combining expert inputs when the experts abstain or when they are incoherent.
Voting schemes \parencite{conitzer-et-al2009, caragiannis-procaccia2013} might also pick the FCM weights and affect the fusion process.
We here take the weights as given and use equal weights as a default.

The $m$ edge matrices $\mathbf{E}_k$ in (\ref{eq:fcm-sum}) must be conformable for addition.
So they must have the same number of rows and columns and in the same matrix positions.
So we first take the union of all concept nodes from all $m$ knowledge sources.
This again gives a total of $n$ distinct concept nodes.
Then we zero-pad or add rows and columns of zeros for missing nodes in a given knowledge source's causal edge matrix.
This gives a conformable $n$-by-$n$ signed fuzzy adjacency matrix $\mathbf{E}_k$ after permuting rows and columns to bring them in mutual coincidence with all the other zero-padded augmented matrices.

The strong law of large numbers gives some guarantees about the convergence of this fusion knowledge graph to a representative population FCM if the knowledge sources are approximately statistically independent and identically distributed and if they have finite variance \parencite{kosko-hidden1988, taber-yager-helgason2007}.
Then the weighted average in (\ref{eq:fcm-sum}) can only reduce the inherent variance in the expert sample FCMs.
So the knowledge fusion process improves with sample size $m$.
Simulations have shown that the equilibrium limit cycles of the combined FCM tend to resemble the limit cycles of the $m$ individual FCMs \parencite{taber-yager-helgason2007}.
An expert random sample is sufficient for this convergence result but not necessary.
A combined FCM may still give a representative knowledge base when the expert responses are somewhat correlated or when the experts do not all have the same level of expertise or problem-domain focus.
Users can also use policy articles or books or legal testimony as proxy experts.

Figure~\ref{fig:fcm-combo} shows the minimal combination case where two FCMs fuse into one representative FCM.
The mixture or convex combination of FCMs creates a new fused FCM as the weighted averages of the FCMs' augmented signed fuzzy adjacency matrices.
Users can add new concept nodes or factors at will.
Each new factor converts all $m$ $n$-by-$n$ edge matrices into $n+1$-by-$n+1$ edge matrices.
This again amounts to adding a new zero-padded row and column to an edge matrix if its corresponding FCM does not include the factor as a concept node.
An expert has a zero row and column for a concept node if the expert impliedly states that that concept is not causally relevant.

This fusion averaging technique can reflect bad effects as well as any other effect.
The technique can reflect anomalous effects due to active sabotage or extreme variance in expert opinions.
Highly variable expert inputs will tend to produce a highly variable FCM causal knowledge base.
There may be no benefit from combining expert edge values that approximate thick-tailed probability densities.
Cauchy probability bell curves closely resemble normal probability bell curves.
Cauchy bell curves have slightly thicker tails that give rise to far more variable realizations.
But the sample average of Cauchy random variables is itself a Cauchy random variable.
So there is no benefit or decrease in system variance whatsoever in this thick-tailed case.
The combined result has the same infinite variance that any one of the individual Cauchy samples has.
Combining knowledge sources with even thicker-tailed probability densities (such as many alpha-stable densities) can produce variability even more extreme than the variability of any of the combined knowledge sources.

Large-scale FCM combination can combine FCMs with simple recursive updates.
Then the new FCM edge matrix equals the current combined FCM edge matrix plus a correction term that includes the new FCM edge matrix.
This recursive form of FCM combination can assist large-scale online FCM knowledge combination in social media and elsewhere.
The recursions apply locally to the current combined edge value $e_{ij}(m)$ that gives the $ij$th causal edge of the $m$ combined FCMs $F_1, \ldots, F_m$.

We state the recursions for the simplest case where all FCMs and hence all knowledge sources have the same weight or credibility.
Let $\bar{e}_{ij}(m)$ denote the sample mean of the first $m$ causal edges $e_{ij}(1), \ldots, e_{ij}(m)$:
\begin{align}
	     \bar{e}_{ij}(m)= \frac{1}{m} \sum_{k=1}^m e_{ij}(k)  .     \label{eq:FCM-edge-sample-mean}
\end{align}
Let $S_{e_{ij}}^2(m)$ denote the corresponding unbiased sample variance of the first $m$ combined $ij$th edge values:
\begin{align}
	    S_{e_{ij}}^2(m)  = \frac{1}{m-1} \sum_{k=1}^m  (e_{ij}(k) -  \bar{e}_{ij}(m))^2      \label{eq:FCM-edge-sample-variance}
\end{align}
for $m > 1$.
The question is how to recursively update each of these averages given a new $ij$th edge value $e_{ij}(m + 1)$.
The answer comes from the predictor-corrector form of updates often found in Kalman filters \parencite{meditch1969stochastic}:
\begin{align}
	     \bar{e}_{ij}(m + 1)   =     \bar{e}_{ij}(m)   +  \frac{1}{m+1}[e_{ij}(m+1) -   \bar{e}_{ij}(m) ]    \label{eq:FCM-edge-sample-mean-recursion}
\end{align}
\begin{align}
	S_{e_{ij}}^2(m + 1)   =    S_{e_{ij}}^2(m)  +  \frac{1}{m}[(e_{ij}(m+1) - \bar{e}_{ij}(m))(e_{ij}(m+1) - \bar{e}_{ij}(m + 1))    -   S_{e_{ij}}^2(m)]  .
\end{align}
The new $ij$th edge statistic equals the old or predicted value plus a new or corrector value.
A similar recursion holds for updating the combined edge's sample covariance.
More complex recursions hold for variable-weight edge values although these will likely not apply in large-scale online settings.

\begin{figure*}
\centering
\includegraphics[width=0.95\textwidth]{./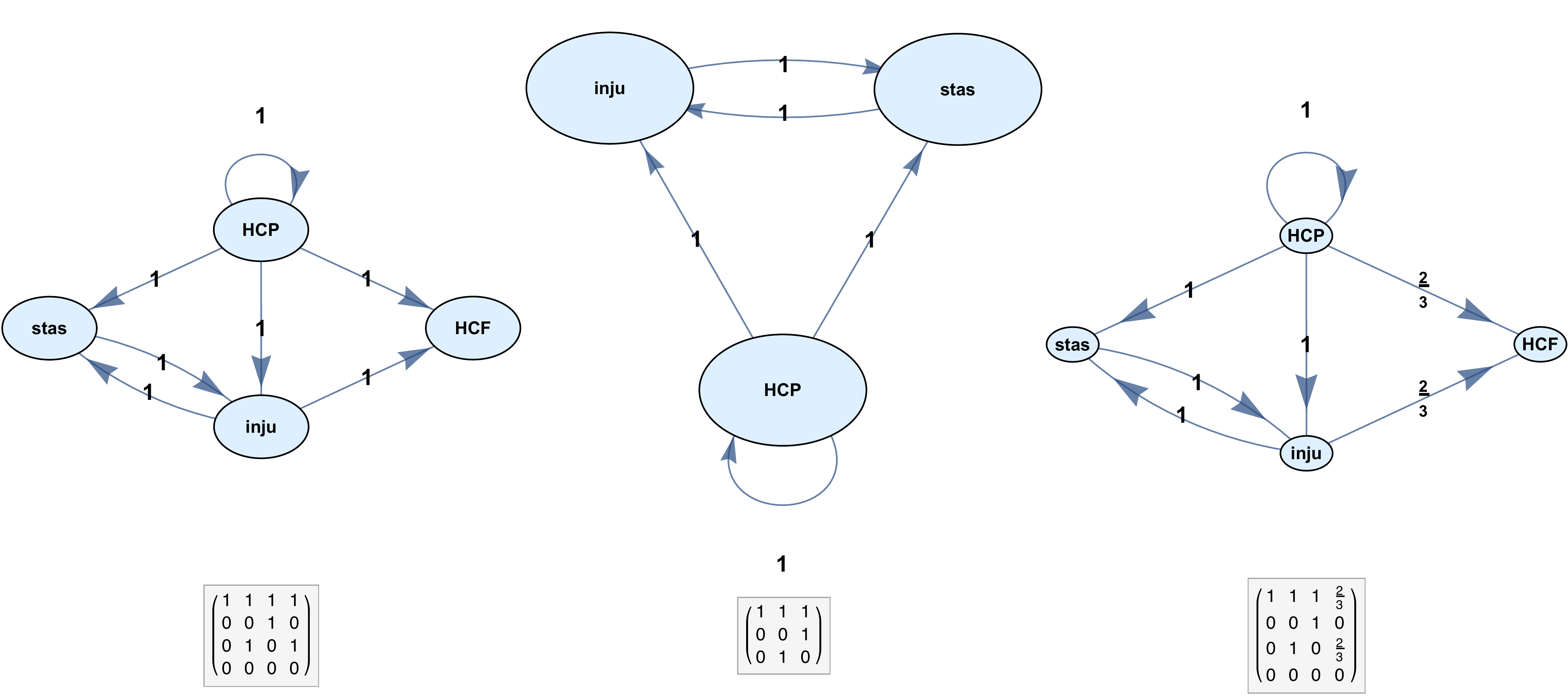}
\caption{FCM knowledge combination or fusion by averaging weighted FCM adjacency matrices. \\
\textbf{Figure Note}: The three digraphs show the minimal case of combining two FCMs that have overlapping concept nodes.
The third FCM is the weighted combination of the first FCMs.
The expert problem domain is the medical domain of strokes and blood clotting involved in Virchow's Triad of blood stasis $stas$, endothelial injury $inju$, and hypercoagulation factors $HCP$ and $HCF$~\parencite{taber-yager-helgason2007}.
Expert $1$ has a larger FCM than Expert $2$ has because Expert $1$ uses an extra concept node.
FCM knowledge combination fuses their knowledge webs by averaging their causal-edge adjacency matrices with the mixing equation (\ref{eq:fcm-sum}).
This weighted average uses each expert's causal edge matrix $\mathbf{E} = \frac{2}{3}\mathbf{E}_1 + \frac{1}{3}\mathbf{E}_2$ as shown in the combined (third) FCM.
The weighting assumes that the first expert is twice as credible as the second expert.
Note that Expert $2$ ignored the $HCF$ factor.
This results in a new row and column of all zeroes in the augmented edge matrix $\mathbf{E}_2$.
}
\label{fig:fcm-combo}
\end{figure*}

We turn next to inferring the directed causal edges $e_{ij}$ from time-series data.

\section{Learning FCM Causal Edges}\label{sec:fcm-learn}

Correlation does not imply causation.
But some time-lagged correlations \emph{suggest} causation.
This is the idea behind the \emph{unsupervised} learning laws below for estimating the directed causal edges $e_{ij}$.
They are unsupervised because there is no teaching signal that the learning process matches against.

We can learn causal edge strengths through the \emph{concomitant activation} among the factor pairs.
This approach assumes that events (factor activities) are more likely to involve a causal connection if the events occur together \parencite{hebb1949, kosko-nnfs, kosko-DHL1986}.
This suggests the well-known Hebbian correlation learning law (neurons that fire together wire together) for training neural network synaptic weights \parencite{kosko-nnfs}:
\begin{align}
\dot{e}_{ij} = -e_{ij} + {C}_{i}{C}_{j}
\label{eq:hebbian}
\end{align}
where $\dot{x}$ denotes the time derivative of the signal $x$.
The passive decay term $-e_{ij}$ stabilizes the learning in the differential-equation model.
It also models a ``forgetting'' constraint that helps the network prune inactive connections.
The product term ${C}_{i}{C}_{j}$ directly models concomitant correlation.

We can instead use \emph{concomitant variation}~\parencite{jsm1843} in time between factors as partial evidence of a causal relation between those factors or concepts.
Suppose the data show that an increase in $C_i$ occurs at the same time as increase in the $C_j$.
This concomitant increase suggests that the edge value $e_{ij}$ should be positive.
Suppose similarly that decreases in $C_i$ occur with decreases in $C_j$.
Then such concomitant decrease suggests a negative causal edge value $e_{ij}$.
Even a slight time lag can between the two concept nodes can indicate the direction of causality in practice.
Such concomitant variation or covariation leads to the \emph{differential} Hebbian learning (DHL) law~\parencite{kosko-DHL1986, kosko-nnfs, kosko-hidden1988}:
\begin{align}
\dot{e}_{ij} = -e_{ij} + \dot{C}_{i}\dot{C}_{j} \;.
\label{eq:dhl}
\end{align}

We use concomitant activation and variation as proxies for causation during unsupervised learning with Hebbian and differential Hebbian learning Laws.
Hebbian learning tends to learn spurious causal links between any two concept nodes that occur at the same time.
This quickly grows an edge matrix of nearly all unity values if most of the nodes are active.
DHL correlates node velocities.
So it has a type of arrow of time built into it.
DHL correlates the signs of the time derivatives.
So it grows a positive causal edge value $e_{ij}$ if and only if the concept nodes $C_i$ and $C_j$ both increase or both decrease.
 It grows a negative edge value if and only if one of the nodes increases and the other decreases.

Both learning laws combine to give a more general version of DHL \parencite{kosko-hidden1988}:

\begin{align}
\dot{e}_{ij} = -e_{ij} + {C}_{i}{C}_{j} + \dot{C}_{i}\dot{C}_{j} \;.
\label{eq:hl+dhl}
\end{align}

This hybrid learning law fills in expected values for edge-strength vales when there is no signal variation in the factor set~\parencite{kosko1990unsupervised}.
The hybrid law takes advantage of the relatively rarer variation events to update the edge weights.
It also tends to produce limit cycles or even more complex equilibrium attractors.
It can produce fixed-point attractors given some strong mathematical assumptions \parencite{kosko-hidden1988, kosko-nnfs}.

Most applications use discretized versions of the DHL law~\parencite{kosko-fuzeng} in (\ref{eq:dhl}):

\begin{align}
{e}_{ij}(t+1) =
\begin{cases}
e_{ij}(t) + \mu \left[ \Delta {C}_{i}(t) \Delta {C}_{j}(t) - e_{ij}(t) \right] &\; \Delta C_i(t) \neq 0\\
e_{ij}(t) &\; \textrm{else}   \label{eq:discrete-dhl-law}
\end{cases}
\end{align}
where $\Delta C_k(t) = C_k(t) - C_k(t-1)$.

 DHL can infer causal edge values in a FCM if the system has access to enough time-series data.
 Such data can again come from expert opinion surveys.
 It can come from direct time-series data on measurable factors.
 Or it can come from indirect instrumental variables linked to the factors of interest:  social media trends, Google trends, or topic modeling on news corpuses.
  Figures~\ref{fig:fcm-adapt} and \ref{fig:blm} show DHL training paths for single causal edge values.
   Figure~\ref{fig:fcm-adapt} learns a causal edge for the PSOT FCM in the next section.
   Figure~\ref{fig:blm} shows the DHL training of an edge value using Google Trends time-series data of the use of politically charged terms ``Black lives matter,'' ``All lives matter,'' and ``Blue lives matter'' in online discourse.
 DHL here converged to an approximation of the causal edge values after only a few iterations.

We can also fuse soft and hard knowledge sources through the above averaging technique in (\ref{eq:fcm-sum}).
Let $\mathcal{E}_{data}$ denote the data-driven FCM.
Let $\mathbf{E}_{exp}$ denote the expert-elicited FCM.
Then the fused causal edge matrix $\mathbf{E}_{fusion}$ is a simple mixture of the two edge matrices:
\begin{equation}
	\mathbf{E}_{fusion} = \omega_{data} \mathbf{E}_{data} + \omega_{exp} \mathbf{E}_{exp} \;.
\end{equation}
Then (\ref{eq:discrete-dhl-law}) or some other statistical learning law can continue the adaptation process by using new numerical data or occasional opinion updates from experts.

We can also learn edge values by taking a cue from the literature on Bayesian networks~{\parencite{friedman-koller2003}}.
This entails putting a prior on a randomized FCM.
Assume first that the FCM graph is random.
Assume next a prior over the space of amenable graphs.
Then use observed node data to update a posterior distribution of compatible FCM graphs.
This Bayesian process requires taking care with the topology and size of the graph spaces.
The process also requires that the user produce an accurate and tractable closed-form prior for the graphs.
Fuzzy rules can directly represent these closed-form priors \parencite{osoba-mitaim-kosko-SMC2011}.

\begin{figure}
\centering
\includegraphics[width=4in]{./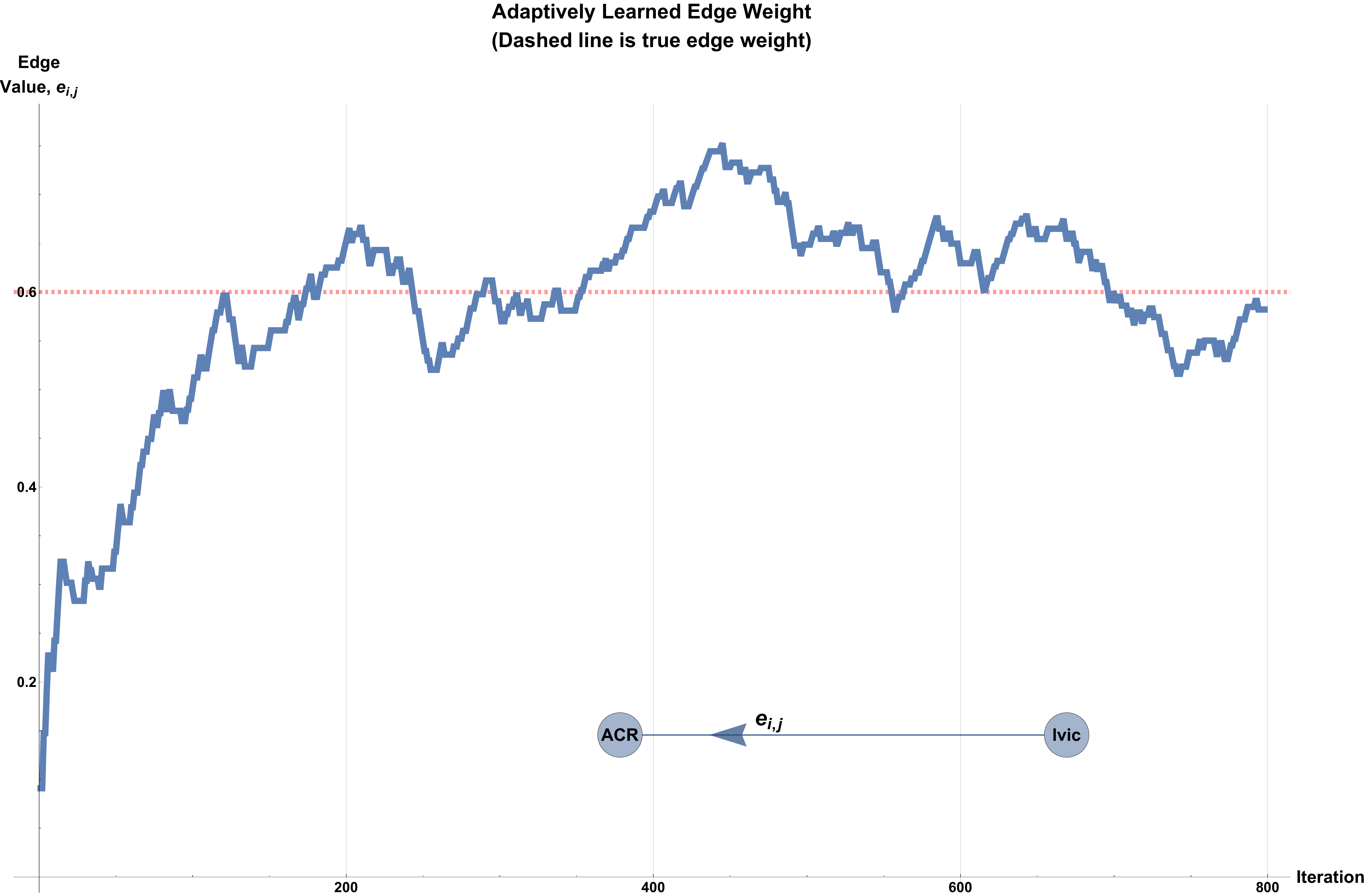}
\caption{
Learning FCM causal-edge values $e_{ij}$ with time-series data from activated causal concept nodes. \\
\textbf{Figure Note}: We can infer the value of this directed causal edge with adaptive inference algorithms such as differential Hebbian  learning if the system has access to enough time-series data for both concept nodes.
The time-series data may come from survey data or field measurements or expert elicitations.
This data came from two samples nodes from the PSOT model.
  The plot shows that differential Hebbian learning quickly converged to a good approximation of the edge value $e_{ij}$.
}
\label{fig:fcm-adapt}
\end{figure}

Learning need not take place only in a stationary causal environment where the underlying causal relations do not change in time.
Causal relations are apt to change in large-scale problems of social science.
Figure~\ref{fig:blm} gives an example.
Adaptive FCMs can still model these nonstationary causal worlds if the causal world does not change too fast and if the FCM learning system has access to enough time-series data that reflects these changes.

\begin{figure}[!ht]
\centering
\begin{minipage}[r]{\textwidth}
\begin{centering}
\begin{tabular}{cc}
\includegraphics[width=0.65\columnwidth,clip]{./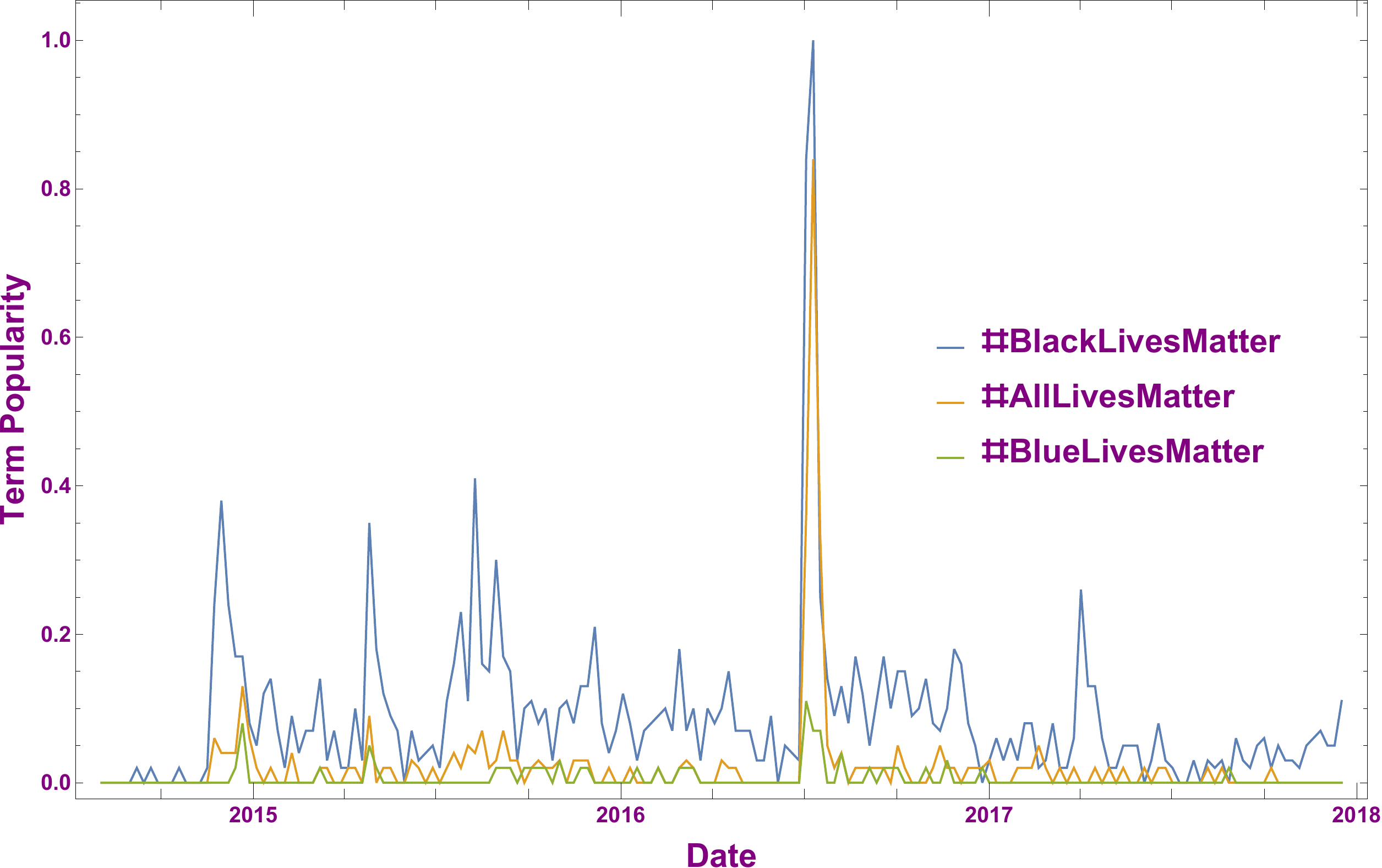}
&
\includegraphics[width=0.35\columnwidth,clip]{./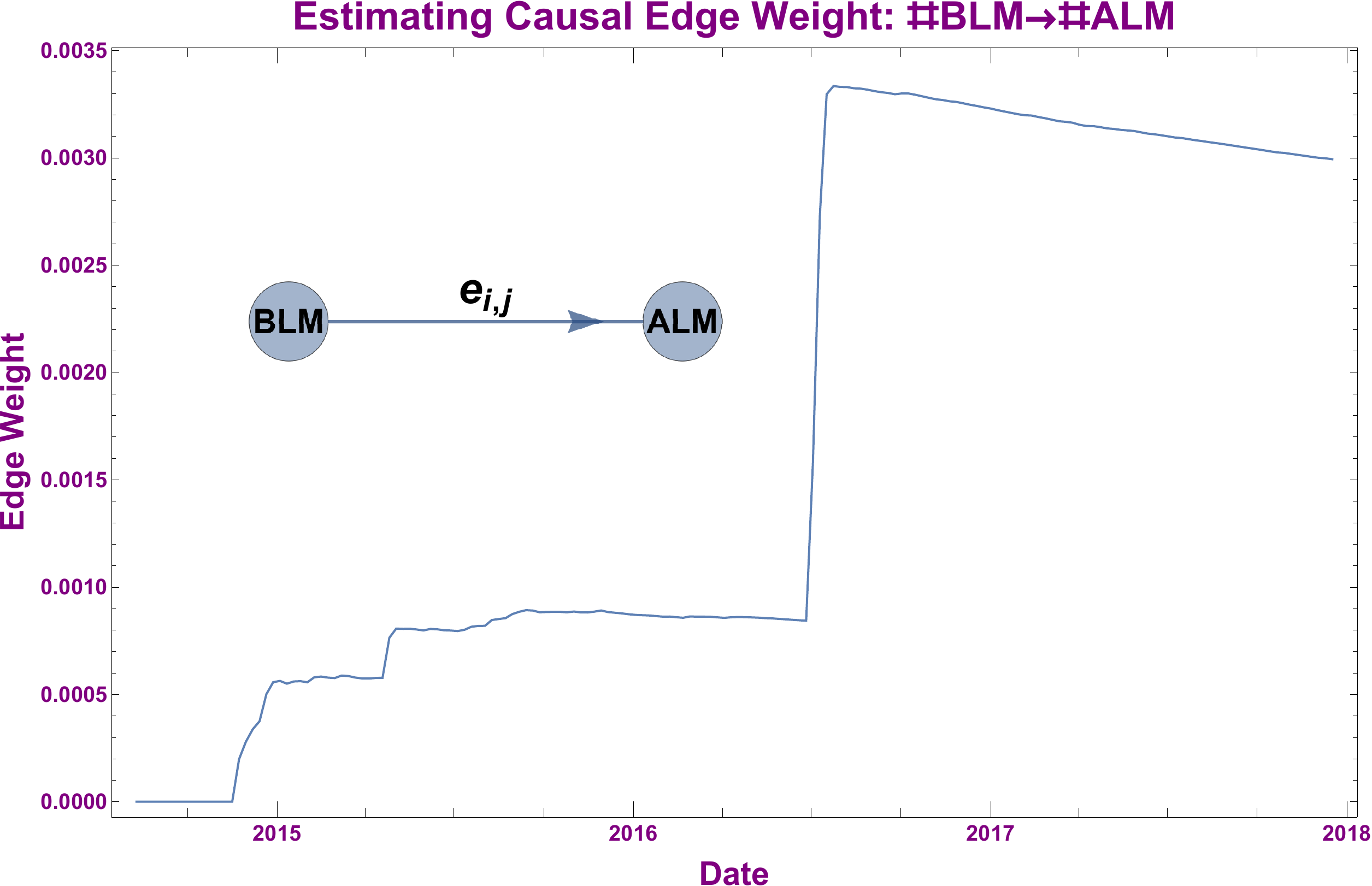}
\end{tabular}
\end{centering}
\end{minipage}
\caption{
Learning FCM causal-edge values $e_{ij}$ with Google Trends time-series data for three politically charged phrases:  Black lives matter, All lives matter, and Blue lives matter. \\
\textbf{Figure Note}: Google Trends time-series data recorded the weekly popularity of these terms in public Google-search activity from January 2014 to February 2017.
The time series consisted of $163$ ordered samples.
The use of BLM-related terms preceded the use of ALM-related terms both in time and in the media narrative.
We used this fact to specify the direction for the causal edge.
}
\label{fig:blm}
\end{figure}

\section{FCM Example:  Public Support for Insurgency and Terrorism}\label{subsec:psot}

Our first substantive FCM policy example is to the problem of public support for insurgency and terrorism (PSOT).
We based two PSOT FCMs on the factor-tree PSOT analysis of \parencite{aom-pkd2013, davis-larson2012}.
Public support for insurgency and terrorism has complex socio-political causes~\parencite{snow-soule-kriesi2008, ibrahim2007, nawaz2015, davis-cragin2009, davis-larson2012} that involve numerous factors.
 Davis's later work~\parencite{davis-wsc2015, davis-rand2015} used the PSOT model to motivate related models of an individual propensity for terrorism.

The PSOT model is a causal factor-tree model because it depicts the degree to which child nodes influence or cause parent nodes.
Figure~\ref{fig:psot} and Table~\ref{tab:psot-desc} give more details of the PSOT factor tree.
The PSOT nodes represent factors that directly or indirectly relate to the \emph{Public Support for Insurgency and Terrorism} concept $PSOT$.

\begin{figure*}
\centering
\includegraphics[width=0.8\textwidth, keepaspectratio]{./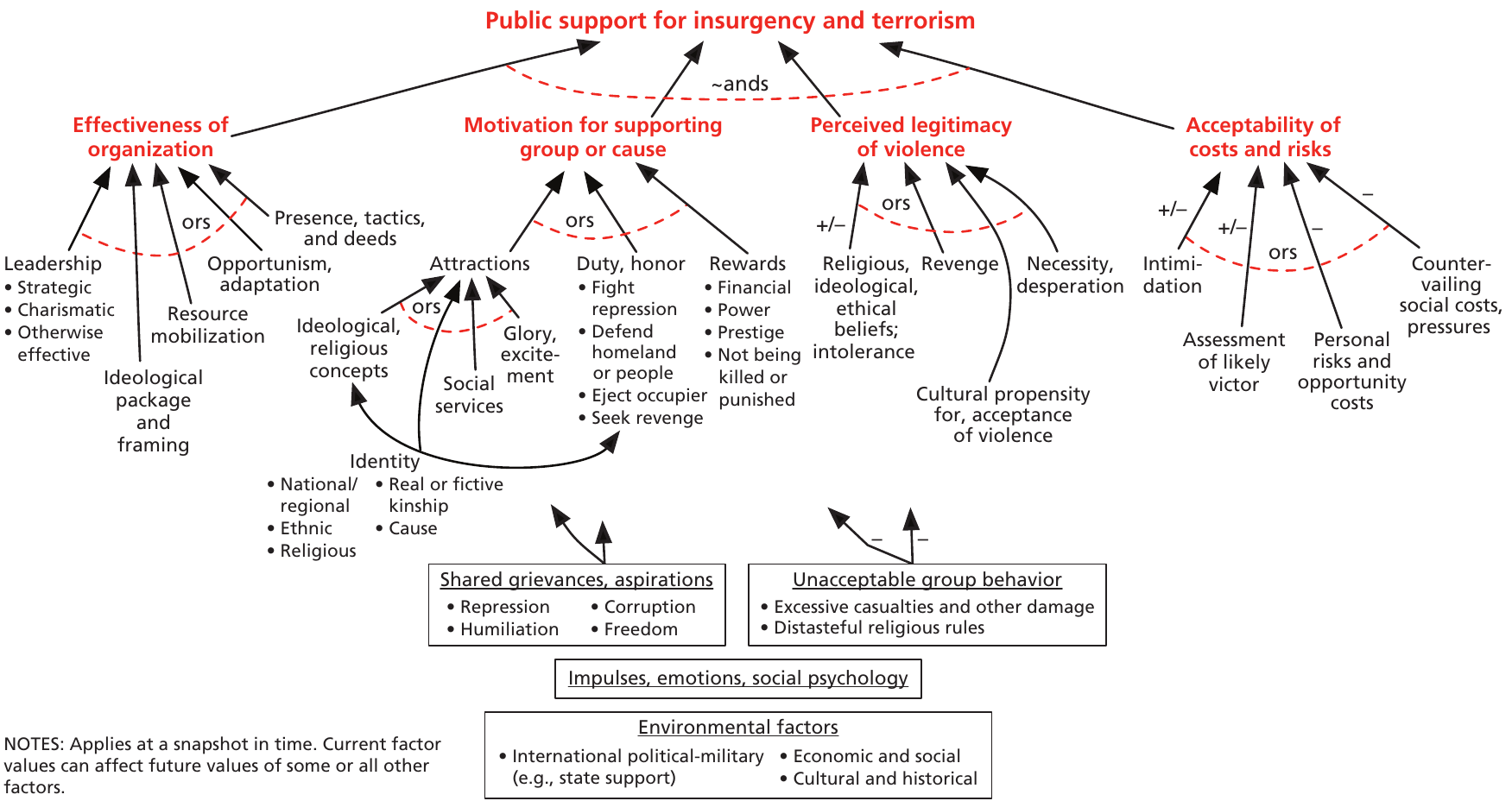}
\caption{PSOT factor-tree model.
The figure shows the directed relationships among factors that underly the Public Support for Terrorism or Insurgency model in \parencite{aom-pkd2013}.}
\label{fig:psot}
\end{figure*}

Davis's factor-tree models are multi-resolution models~\parencite{davis-bigelow1998}.
Major elements have a hierarchical structure that allows users to specify factors at different levels of detail.
Each node is an exogenously driven factor or it fires or activates based on a function of its inputs.

There are also cross-cutting factors besides sub-node factors.
Cross-cutting factors affect multiple factors simultaneously.
The ``$\tilde and$'' nodes depend on all fan-in factors being present to a first approximation.
The ``$\tilde or$'' nodes depend on any of the fan-in factors being active or on a combination of the fan-in factors being active.
There are several top-level factors that directly relate to the general $PSOT$  of \parencite{davis-larson2012}:
{Effectiveness of the organization $EFF$, motivation for supporting the group/cause $MOTV$, the perceived legitimacy of violence $PLEG$,} and {the acceptability of costs and risks $ACR$}.
Each of these factors have attendant contributory sub-factors.

PSOT edges denote positive influences by default.
We denote negative edges with `-' as with a FCM causal-decrease edge.
Factor activation along a negative edge reduces the activation of the parent factor.
We denote \emph{ambiguous} edges with ``$+/-$''.
The ambiguity refers to uncertainty over the edge's direction of influence.

We based our FCM models on the important case of the al-Qa'ida transnational terrorist organization.
We augmented the original PSOT with cross-links in the dynamic model to allow richer representation of system dynamics.

Davis et al.~\parencite{davis-larson2012} have discussed how the PSOT model explains the public support for al-Qa'ida's mission as follows (paraphrased from \parencite{davis-larson2012}).
 The organizational effectiveness of al-Qa'ida depends in part on the charisma, strategic thinking, and organizational skills of its {leadership} ($lead$).
 al-Qa'ida has framed its ideology to appeal to many Muslims worldwide.
 Motivation for public support of al-Qa'ida's beliefs comes from shared religious beliefs that stress common {identity} ($id$) and the sense of {duty} ($duty$) that such identity fosters.
  al-Qa'ida also relies on a popular narrative of {shared grievances} ($shgr$) in the Muslim world.
  al-Qa'ida stresses the perceived {glory} ($glry$) of supporting a cause that aims to redress these purported grievances.
  {Religious beliefs and intolerance} ($intl$) help increase the {perceived legitimacy} ($PLEG$) of violence against the West and against the many Muslims who do not share their Salafist views.
  {Countervailing pressure} ($scst$) discourages more support for al-Qa'ida.
  This countervailing pressure occurs in part because much of the public believes that al-Qa'ida will not succeed and thus emerge as ultimate victors ($lvic$).
  This pressure reduces the {acceptability of costs and risks} ($ACR$) for al-Qa'ida activities.
  The parameters of this al-Qa'ida case study determined the relative causal edge weights in our FCM models.

\begin{table}
\centering
\begin{tabular}{|l|l|}
\hline
\textbf{Label} & \textbf{Full Description} \\
\hline
 \text{lead} & \text{Leadership Strategic or otherwise} \\
 \text{pkg} & \text{Ideological Package $\&$ Framing} \\
 \text{rsrc} & \text{Resource Mobilization} \\
 \text{opp} & \text{Opportunism $\&$ Adaptation} \\
 \text{pres} & \text{Presence, Tactics, $\&$ Deeds} \\
 \text{EFF} & \text{Effectiveness of Organization} \\
 \text{reli} & \text{Ideological Religious Concepts} \\
 \text{socs} & \text{Social Services} \\
 \text{glry} & \text{Glory, Excitement} \\
 \text{ATT} & \text{Attractions} \\
 \text{duty} & \text{Duty $\&$ Honor} \\
 \text{rwrd} & \text{Rewards} \\
 \text{MOTV} & \text{Motivation for Supporting Group, Cause} \\
 \text{intl} & \text{Religious, Ideological, Ethical Beliefs; Intolerance} \\
 \text{rvng} & \text{Revenge} \\
 \text{cprop} & \text{Cultural Propensity for Accepting Violence} \\
 \text{desp} & \text{Desperation, Necessity} \\
 \text{PLEG} & \text{Perceived Legitimacy of Violence} \\
 \text{intm} & \text{Intimidation} \\
 \text{lvic} & \text{Assessment of Likely Victor} \\
 \text{prsk} & \text{Personal Risk and Opportunity Cost} \\
 \text{scst} & \text{Countervailing Social Costs $\&$ Pressures} \\
 \text{ACR} & \text{Acceptability of Costs $\&$ Risks} \\
 \text{id} & \text{Identity} \\
 \text{shgr} & \text{Shared Grievances $\&$ Aspirations} \\
 \text{ugb} & \text{Unacceptable Group Behavior} \\
 \text{env} & \text{Environmental Factors} \\
 \text{impl} & \text{Impulses, Emotions, Social Psychology} \\
 \text{hsucc} & \text{History of Successes} \\
 \text{mgtc} & \text{Management Competence} \\
 \text{prop} & \text{Propaganda, Advertising} \\
 \text{efdoc} & \text{Effectiveness of Indoctrination/Passing Beliefs} \\
 \text{hfail} & \text{History of Failures} \\
 \text{PSOT*} & \text{Public Support for Insurgency and Terrorism} \\
 \hline
\end{tabular}
\caption{Table of factors in the Public Support for Insurgency and Terrorism (PSOT) model.}
\label{tab:psot-desc}
\end{table}

\begin{figure*}
\centering
\includegraphics[width=0.75\textwidth]{./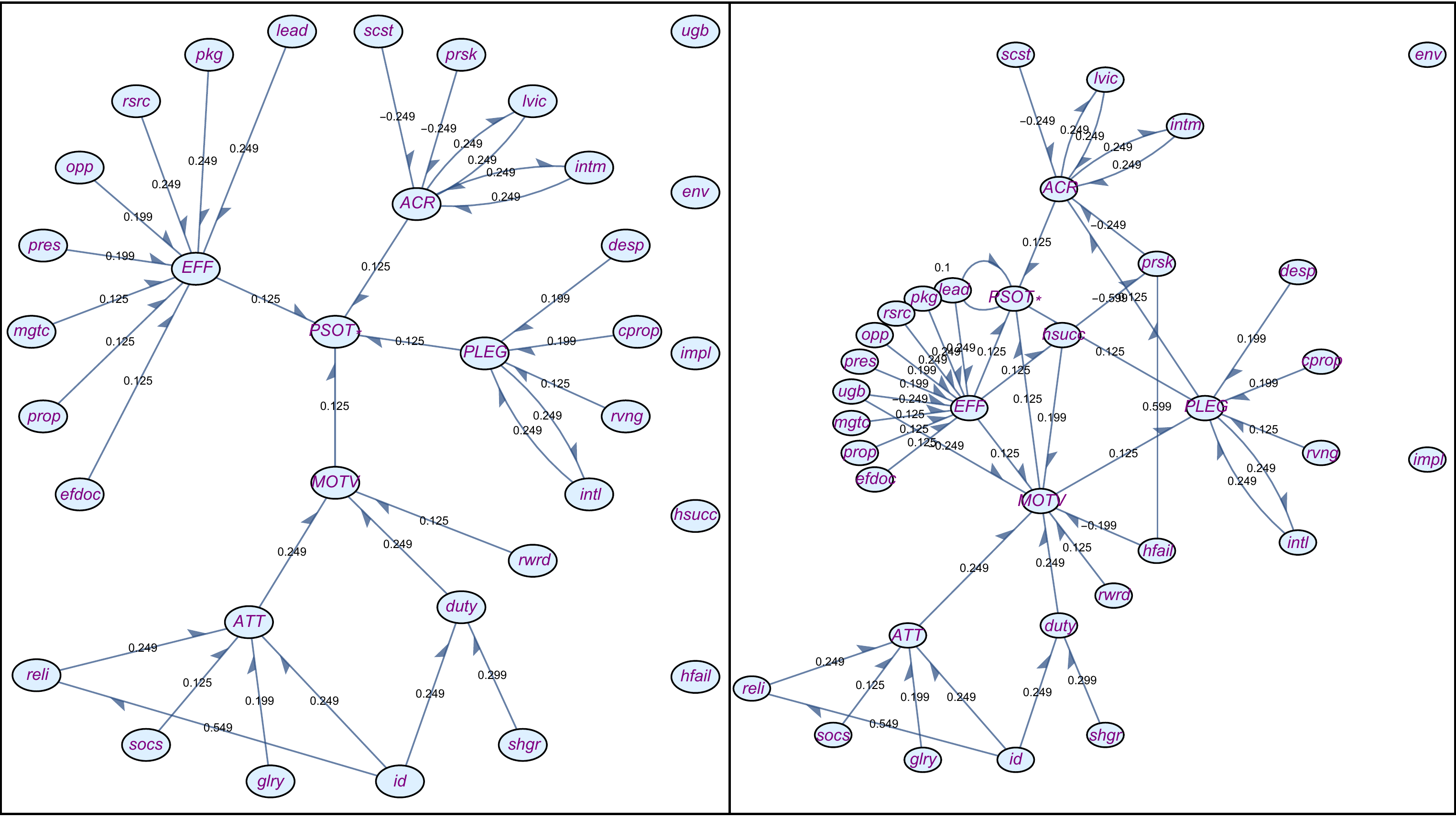}
\caption{
Two Fuzzy Cognitive Maps of the PSOT factor-tree model. \\
\textbf{Figure Note}: The left panel shows the FCM digraph for the original static (acyclic) PSOT model.
The right panel shows the FCM digraph of the dynamic PSOT model with cross-links.
Table~\ref{tab:psot-desc} gives the key for the concept node labels in both FCMs.
We based the new FCM edges in the right digraph on the findings in ~\parencite{aom-pkd2013} and on expert input.
Figure 6 in \parencite{osoba-kosko2017} illustrates the causal-edge connection matrices $\mathbf{E}$ (the adjacency matrices) for each FCM using heatmaps.
}
\label{fig:psot-compare}
\end{figure*}

Figure \ref{fig:psot-compare} shows FCM versions of the old static (acyclic) PSOT model and new dynamic PSOT model.

We now outline these changes to the original PSOT model.
We first added a weak self-excitation feedback loop to the PSOT concept node because it is the highest-level concept node.
This self-excitation loop modeled inertia in aggregate public opinion about insurgency and terrorism.
This new feedback source induced a weak serial correlation in time in the PSOT concept node.

The next directed edges connected the top-level factors in  \parencite{aom-pkd2013} from left to right:  $EFF \rightarrow MOTV$, $MOTV \rightarrow PLEG$, and $PLEG \rightarrow ACR$.
These directed causal edges made explicit an implicit point about O'Mahony and Davis's use of factor trees.
Their factor-tree representation assumed a left-to-right dependence of the top-level factors that we have linked~\parencite{davis2011,aom-pkd2013}.
This implicit dependence made their factor tree more readable.
The FCM model made this dependence explicit.

O'Mahony and Davis~\parencite{aom-pkd2013} discuss other dynamic augmentations to the PSOT model.
They point to the following new factors.
A history of successes or failures can affect motivation and perceived risks.
We model this dependence with the two factors ``history of successes'' and ``history of failures.''
These two nodes exert opposing influence on $MOTV$ and $prsk$.
We split this history factor because traditional FCM models admit only positive values that represent the degree or intensity to which a concept occurs.
The effectiveness of the organization factor \emph{EFF} partly determines the history of successes:  $EFF \rightarrow hsucc$.
Unacceptable group behavior $ugb$ also influences motivation and effectiveness: and $ugb \rightarrow MOTV$ and  $ugb \rightarrow EFF$.

%\begin{figure*}
%\centering
%\includegraphics[width=0.95\textwidth]{./figs/fcm-psot-dbl.png}
%\caption{Graphical display of the causal-edge connection matrices. $\mathbf{E}$ for the original static PSOT FCM on the left and for the dynamical PSOT FCM on the right. Each FCM's causal-edge or connection matrix $\mathbf{E}$ is the adjacency matrix for the FCM's fuzzy signed directed graph. Each square shows the fuzzy causal edge value $e_{ij}$. The value $e_{ij}$ how much the $i^{th}$ concept $C_i$ causes or influences the $j^{th}$ concept $C_j$. The matrix entries $e_{ij}$ in these FCMs are fuzzy values in the bipolar interval $[-1,1]$. Blue squares show negative causal influence as the color bars indicate. Orange squares show positive causal influence. White squares show the absence of causal influence. These matrix intensity plots are larger-scale analogues of the matrix in Equation (\ref{eq:fcm-mtx-ex}) but for a larger set of concepts. The dynamic FCM-PSOT is somewhat less sparse than the static FCM-PSOT because the dynamic FCM-PSOT asserts more directed causal edge values between factors.}\label{fig:fcm-mplot}
%\end{figure*}

\section{US-China Relations: A FCM of Allison's Thucydides Trap}\label{subsec:trap}

We next use a FCM to model a new conflict dynamic in international relations.

Political scientist Graham Allison calls this dynamic the \emph{Thucydides trap} ~\parencite{allison2015, allison2017}.
Allison argues that this dynamic occurs when a new power emerges that challenges the dominance of an older power on the world stage.
Superpowers such as the United States and China must avoid the Thucydides trap to avoid war.

Our FCM interpretation of Allison's analysis predicts some type of war pattern in some cases and not in others.
%About $80\%$
A large percentage (most) of the clamped input states led to a war-type outcome.
But this was not a probability estimate.
It reflects an exhaustive search of all possible clamped input states.
It does not reflect that relative likelihood of the clamped input states themselves.

We based the fractional causal edge values $e_{ij}$ for this FCM on Allison's text.
See the tables of textual justifications below.
We also tested the robustness of this properly fuzzy FCM by thresholding all positive edge values $e_{ij} > 0$ to $1$ and all negative edge values $e_{ij} < 0$ to $-1$.
This gave a trivalent FCM that predicted some type of war for the majority of all clamped input states.
%about $85\%$ of all clamped input states.
We stress again that the prevalence of a war outcome in this model does \emph{not} mean that the FCM predicts war with high probability.
%this does \emph{not} mean that the FCM predicts war with probability $.85$.
That would require that all input states are equally likely and they clearly are not.
We did not address the issue of which inputs are more or less likely to occur.
Our task was to translate Allison's textual claims into a representative FCM causal model and explore its pattern predictions.

The name ``Thucydides trap'' stems from a famous political conjecture in Thucydides' \emph{History of the Peloponnesian War}~\parencite{thucydides-jowett} (Book 1, paragraph 23): ``the real though unavowed cause [of the war] I believe to have been the growth of Athenian power, which terrified the [Spartans] and forced them into war.''
Thucydides expands on his causal theory of war in a speech that an Athenian gives to the Spartan assembly~\parencite{thucydides-jowett} (Book 1, paragraph 76):

\begin{quotation}
    So that, though overcome by three of the greatest things, honor, fear, and profit, we have both accepted the dominion delivered us and refuse again to surrender it, we have therein done nothing to be wondered at nor beside the manner of men.
    Nor have we been the first in this kind, but it hath been ever a thing fixed for the weaker to be kept under by the stronger.
\end{quotation}

Thucydides claimed that three main factors determine how nation-states interact: interest, fear, and  honor.
The \emph{interest} or profit factor just restates a nation's self-interested actions.
Nation-states act against other states to maintain their high-priority national interests within the geographic scope of their power.
These interests include national security, economic security, and sovereignty.
\emph{Fear} refers to the emotionally charged frames through which a nation views world events.
\emph{Honor} refers to the nation's senses of self and entitlement.
Examples include the nineteenth-century US's \emph{manifest destiny} or China's older concept of \emph{Tianxia} or ``all under heaven.''

Allison expands on these factors in his Thucydides-trap model where again the rise of a new power risks war with a dominant power.
He argues that fear is the main cause of war between such a dominant power and a new rising power.
He looked at $16$ such historical power struggles that extend back to the $15^{th}$ century.
He found that $12$ of these power struggles ended in war.
Allison also contends that similar structural dynamics apply elsewhere in international relations.

We parsed Allison's analysis~\parencite{allison2017} to create a FCM of the Thucydides trap for current US-China relations.
The FCM follows Thucydides and uses his three main factors of interest, fear, and honor.
Auxiliary factors give context to the main factors.
The resulting Thucydides-trap FCM has $17$ factors.
Table \ref{tab:trap-desc} lists and describes these factors.

\begin{table}
\centering
	\begin{tabular}{|l|l|}
	\hline
	\textbf{Label} & \textbf{Full Description} \\
	\hline
	\text{FEAR} & \text{Fear}\\
	\text{usd} & \text{US Military/Defense Posture}\\
	\text{chnd} & \text{China Military/Defense Posture}\\
	\text{geod} & \text{Geographical Distance}\\
	\text{ENT} & \text{Sense of Entitlement/Honor}\\
	\text{uspub} & \text{US Public Resentment}\\
	\text{chnpub} & \text{Chinese Public Resentment}\\
	\text{dipl} & \text{Diplomacy Channels \& International Rules}\\
	\text{NUKE} & \text{Nuclear Power/MAD}\\
	\text{ShrdCult} & \text{Shared Culture}\\
	\text{INT} & \text{National Interests Clash}\\
	\text{usecon} & \text{US Economic Dominance}\\
	\text{chnecon} & \text{China Economic Dominance}\\
	\text{econdep} & \text{Economic Interdependence}\\
	\text{ally} & \text{Alliance Network Structural Friction}\\
	\text{shi} & \text{`Shi' or Contextual/Historical Military Momentum}\\
	\text{WAR*} & \text{War, Military Conflict between USA and China}\\
	\hline
	\end{tabular}
\caption{Factors in the Thucydides trap for relations between the US and China in 2017.}
\label{tab:trap-desc}
\end{table}

The Thucydides-trap FCM also uses some of the auxiliary concepts that Allison discussed.
One example is how nuclear weapons affect the chance of all-out war.
Diplomatic institutions and economic dependencies also affect the chance of war.
Treaty and alliance obligations can rapidly induce or expand war as happened in both World Wars.
The FCM in Figure~\ref{fig:ttrap} shows the directed causal edges among the concepts.
Figure \ref{fig:trap-mplot} shows the Thucydides-trap FCM's causal edge matrix $\mathbf{E}$.

We surmised the causal edge strengths based on Allison's discussions.
The final tables below show the textual justifications for the translation into these edge values $e_{ij}$.
Below we present the results of thresholding the magnitudes to their binary extremes.

\begin{figure}[!ht]
\centering
\includegraphics[width=0.75\textwidth]{./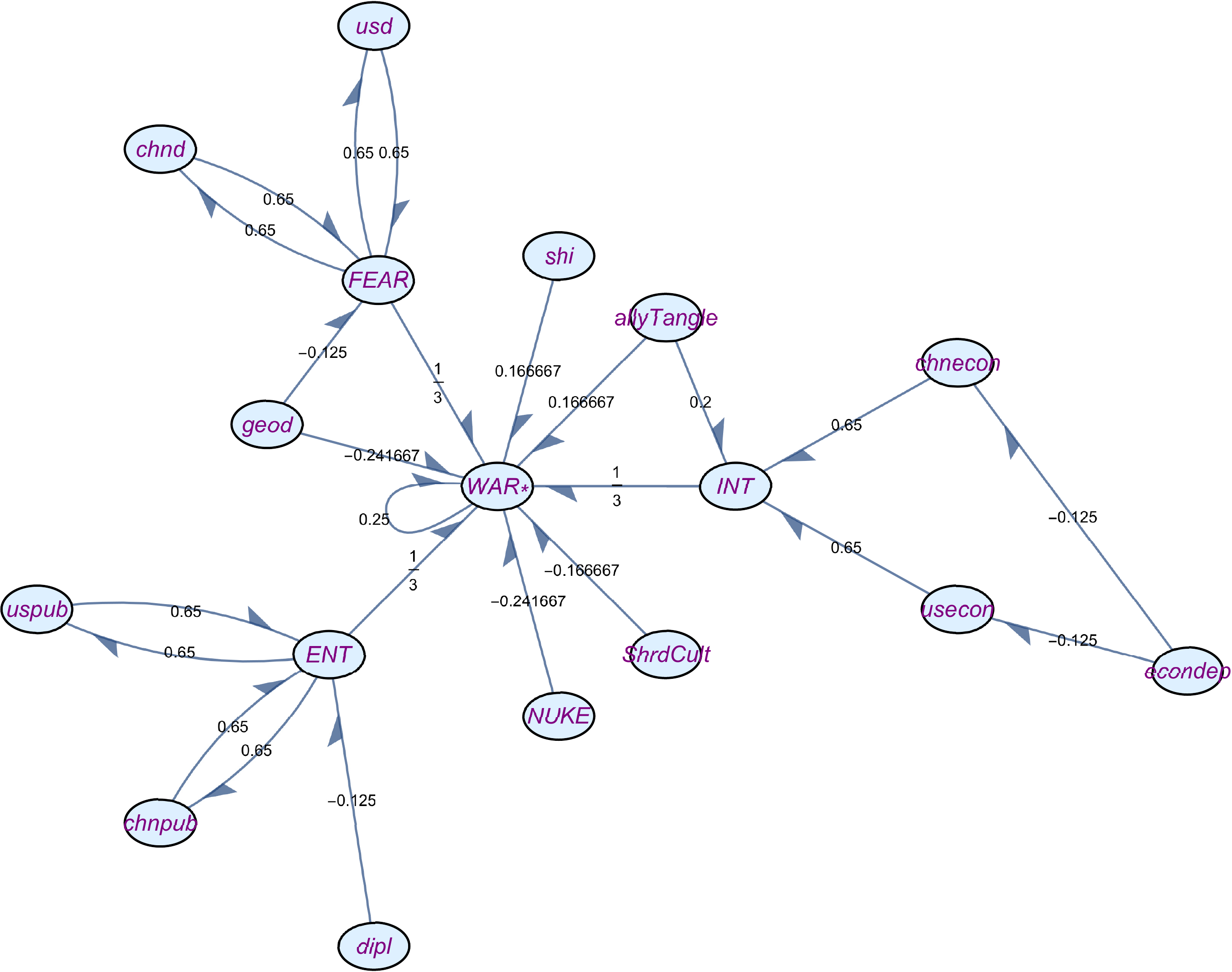}
\caption{
FCM of Allison's \emph{Thucydides' Trap} causal dynamics from \parencite{allison2017}.
}
\label{fig:ttrap}
\end{figure}

\begin{figure}[!tbp]
\centering
\includegraphics[width=0.6\textwidth, keepaspectratio]{./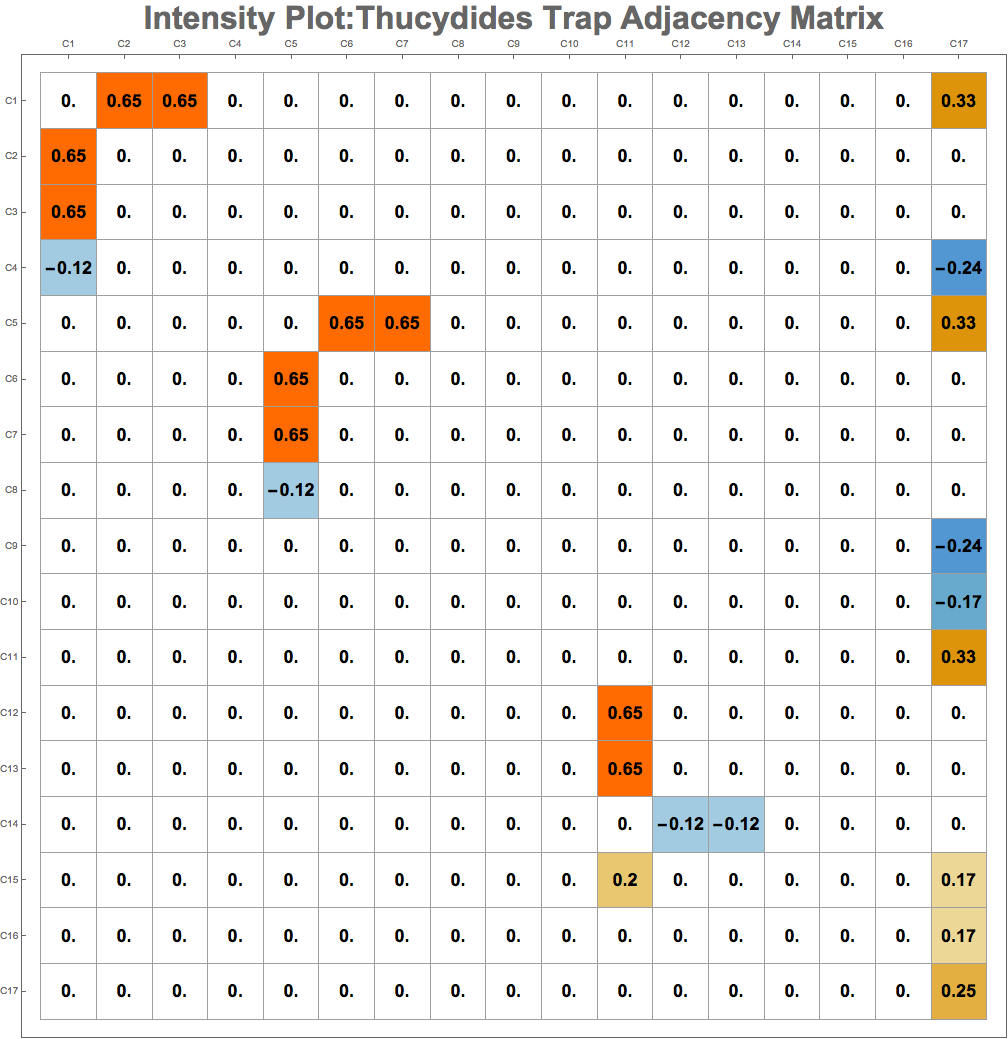}
\caption{Causal edge matrix $\mathbf{E}$ for the Thucydides' trap FCM. \\
$\mathbf{E}$ is the adjacency matrix for the FCM's fuzzy signed directed graph.
Each square shows the fuzzy causal edge value $e_{ij}$.
The value $e_{ij}$ how much the $i^{th}$ concept $C_i$ causes or influences the $j^{th}$ concept $C_j$.
The matrix entries $e_{ij}$ in these FCMs are fuzzy values in the bipolar interval $[-1,1]$.  %Blue squares show negative causal influence as the color bars indicate. Orange squares show positive causal influence.
Uncolored squares indicate the absence of causal influence.
These matrix intensity plots are larger-scale analogues of the matrix in Equation (\ref{eq:fcm-mtx-ex}) but for a larger set of concepts.
}
\label{fig:trap-mplot}
\end{figure}

The Thucydides' Trap FCM predicted war-type patterns between the US and China more often than it predicted peace-type patterns.
An exhaustive search of the space of possible (clamped) scenarios found that only under $\sim 20\%$ of scenarios led to lasting peace between the dominant power (US) and the rising power (China).
We point out again that these are not representative probabilities because we did not know or estimate the relative probabilities of the input states.
We simply assumed that they were all equally likely.
The FCM rapidly converged to an equilibrium state where ${WAR*}$ was active when the input consisted of US-specific nodes that were stagnant and China-specific nodes that were rising.
The key factors present in peaceful accommodations were significant geographic distance, mutual assured destruction (via nuclear weapons posture), a shared culture, economic interdependence, and the presence of diplomatic channels.

We simulated the FCM dynamical evolution from \emph{trap-like} initial conditions (see Figure~\ref{fig:trap-evol} for initial states and evolution trace).
The test scenario consisted of six causal relations:
\begin{itemize}
	\item The US maintains a strong military or defensive posture.
	\item China is economically rising or already dominant.
	\item US public has high resentment towards China is high.
	\item Both sides are economically interdependent.
	\item Both sides have enough nuclear capability to pose credible threats to each other (sufficient for deterrence).
	\item Strong diplomatic channels exist between both sides.
\end{itemize}
We coded this initial scenario for the FCM concept nodes.
Forward inference gave the sequence of states in Figure \ref{fig:trap-evol}.

\begin{figure}
\centering
\includegraphics[width=0.65\textwidth,height=0.6\textheight,keepaspectratio]{./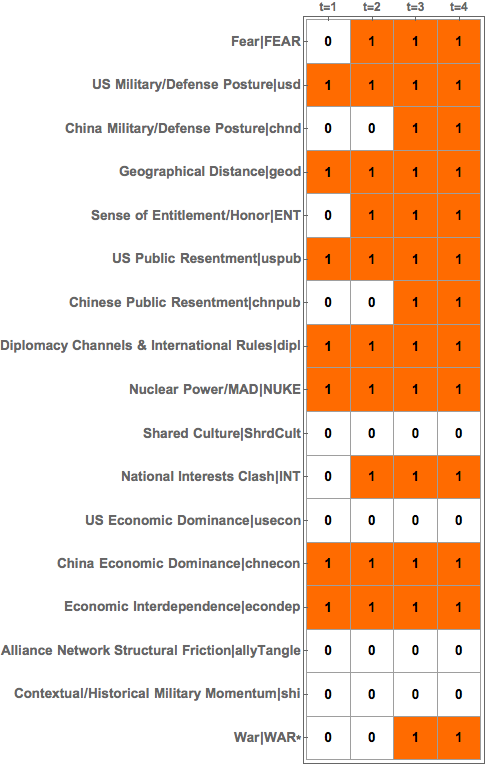}
\caption{Spreading causal activation time slices in the Thucydides-trap FCM.
Each column is a discrete step in time.
The FCM converged in $4$ iterations.}
\label{fig:trap-evol}
\end{figure}

The FCM converged in $4$ iterations to a fixed-point equilibrium state with the ${WAR*}$ node active.
The FCM's state evolution showed that China's economic dominance led to a clash in national interests while the US's defensive posture led to fear.
This led China to ramp up its defensive posture.
The US public's resentment towards China led to a sense of entitlement.
That led in turn to Chinese public resentment.
The clash in national interests, fear, and sense of entitlement or national honor combined to activate the ${WAR*}$ node.
This FCM behavior is similar to the Thucydides-trap dynamics that Allison described.

The FCM's war prediction was robust against many perturbations of the input state.
It persisted despite changes in the activation of nodes like diplomacy and geographic distance.
But activating the Shared-Culture concept node did prevent war.
The FCM also fell out of the war equilibrium when we shut off either the concept node for US Defense Posture or for Chinese Economic Dominance.
These peaceful equilibrium outcomes also appear consistent with Allison's analysis.
Figure \ref{fig:peace-conditions} (left panel) shows the average concept-node activations for initial scenarios that led to peaceful outcomes.

Other analysts may well surmise different fuzzy causal edge values $e_{ij} \in [-1, 1]$ given the same cited text in the tables.
We would expect more agreement on the signs of these edges.
So we tested whether a thresholded version of our properly fuzzy Thucydides-trap FCM made similar equilibrium predictions.
We formed this trivalent Thucydides-trap FCM by replacing all positive edges $e_{ij} > 0$ with $1$ and all negative edges $e_{ij} < 0$ with $-1$.
Zero-valued edges stayed the same.
The trivalent FCM still predicted war-like patterns for most clamped input states.
It predicted peace for only $\sim 15\%$ of all input scenarios compared with $\sim 20\%$ for the original fuzzy Thucydides-trap FCM.
We point out again that we treated all input states as equally likely.
Real-world conflict scenarios are not equally likely.
The right panel of Figure \ref{fig:peace-conditions} shows the similar average concept-node activations for the input scenarios that resolved peacefully in the trivalent FCM.
This counts as evidence that the properly fuzzy Thucydides-trap FCM was reasonably robust to perturbations in the causal edge-value magnitudes.

\begin{figure}[!ht]
\begin{minipage}[r]{\textwidth}
\begin{tabular}{cc}
\includegraphics[width=0.475\columnwidth,clip]{./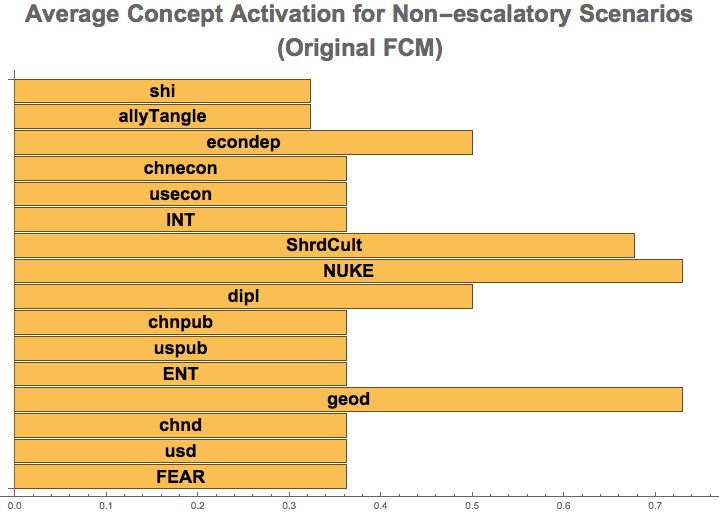}
&
\includegraphics[width=0.475\columnwidth,clip]{./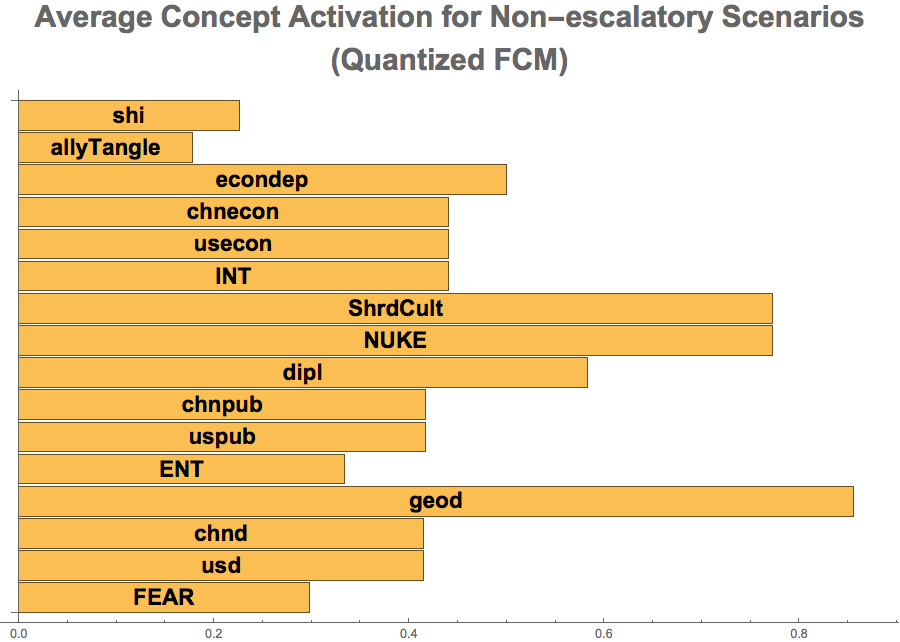}
\end{tabular}
\end{minipage}
\caption{Average node activations for input scenarios that converge to peaceful outcomes for both the original FCM and the thresholded FCM. \\
\textbf{Figure Note}: \textit{Left}: Average concept-node activations for initial scenarios that did not escalate into war.
The bar chart shows that non-escalatory scenarios tended to involve large geographic separation (geod), the availability of nuclear weapons (NUKE), the presence of diplomatic channels (dipl) for resolving issues, and a shared culture between both parties (shrdCult). Economic interdependence (econdep) was also a common feature of peace.\\
\textit{Right}: Trivalent Thucydides-trap FCM:  Average concept-node activations for initial scenarios that did not escalate into war. The trivalent (thresholded) FCM's average activation patterns for peaceful scenarios were similar to those of the original l properly fuzzy FCM.
The key concept nodes associated with peace (geographic distance, nuclear posture, shared culture, presence of diplomatic channels and economic dependence) were the same for both FCMs.
$14.7\%$ of all input scenarios resolved peacefully for the quantized FCM compared with $19.3\%$ for the original FCM.
 This suggests that the Thucydides-trap FCM’s behavior was reasonably robust to mis-specified causal edge-value magnitudes $e_{ij}$. }
\label{fig:peace-conditions}
\end{figure}

Analysts may also disagree on structure and not just numerical edge values.
Other researchers may surmise different causal links (edges) or different relevant concept nodes.
This may depend on an alternative reading of the source text or different domain expertise.
Such disagreements reflect the value of FCM modeling because different FCM models can capture these expert differences and then combine them if needed into a summary FCM.

The US-China FCM shows just one way that FCMs can represent this complex pattern of international dynamics.
Domain experts can instantiate concurring or dissenting causal maps.
Experts and critics alike can then compare or contrast the different theories in this social-scientific domain.
Such comparisons extend beyond just comparisons of the basic representation of different social-scientific theories.
Analysts can also compare the long-term implications of the different theories with more quantitative rigor.
They can compare the equilibrium ``hidden patterns'' in the FCMs' temporal dynamics).

\section{Conclusion} \label{sec:end}

Fuzzy cognitive maps offer a flexible way to model large-scale feedback causal system and to make forward inferences.
Their cyclic structure produces a nonlinear dynamical system that tends to quickly equilibrate to limit-cycle predictions given a causal input or stimulus.
Users can also step through FCM transient or equilibrium states and thereby unfold the dynamical system in time.
The underlying matrix structure of a FCM's directed causal edges permits natural knowledge fusion through simply adding or mixing the augmented FCM causal-edge matrices for any number of experts.
Such knowledge combination tends to improve with the number of combined experts.

Directed acyclic graphs lack these features precisely because they are acyclic.
Their use of probability to describe causal uncertainty is secondary to their lack of cycles in terms of modeling feedback.
Cycles describe feedback in directed graphs.
And combining directed graphs will in general produce several such cycles.
The vector-matrix operations of FCMs also involve much less computation than the probabilistic computations in Bayesian belief networks.
But FCMs cannot produce the precise probability descriptions that DAGs can if the user knows the DAG's corresponding complete joint probability density function and uses the sum-product algorithm.
Imposing this or a related probability structure on a FCM is an area for future research.

Current FCM inference and learning have two key limitations that future research also needs to address.
The first is that FCMs do not easily permit backward chaining.
So they do not in general answer which input caused an observed output effect.
Users cannot simply run the FCM in reverse because of the node nonlinearities.
We instead must exhaustively test all or nearly all input states to see which inputs map to which output equilibria.
This computes the inverse image of each output attractor basin.
It carves up the FCM state-space into attractor regions.
We know for a given output only that the input came from an attractor region.
Future research should address this limitation with new inferencing or other techniques.

The second limitation is more challenging:  How do we infer missing FCM concept nodes?
This just asks how we come up with a new causal hypothesis.
A new node leads to new causal conjectures for all nonzero edges that connect to the new node.
Current adaptive techniques infer and tune the causal edge values only for known concept nodes.
An open research problem is to find data-based techniques that infer new or missing concept nodes in large-scale FCM causal models.
Solutions may include Bayesian priors or rules over node sets or other statistical techniques for model building.

%% for biblatex
\printbibliography

\section*{Mathematical Appendix:  FCM Causal Influence Theorems} \label{aopendix}

This appendix states and proves the two main theorems on downstream causal influence in a FCM.
Both theorems apply to any two concept nodes in a FCM.
Theorem 1 shows the \emph{transitive} effect that upstream concept node $C_{j_1}$ causes on downstream concept node $C_{j_{k+1}}$ along the lone directed causal path with pairwise directed causal edge values $e_{j_1 j_2}, e_{j_2 j_3}, \ldots , e_{j_k j_{k+1}}$.
Theorem 2 extends this result to the total downstream effect along all acyclic paths from $C_{j_1}$ to $C_{j_{k+1}}$.

We first show next how \emph{causal influence} propagates through a FCM digraph with continuous or smooth concept nodes.
The results still apply to binary or threshold concept nodes if the nodes use a steep logistic or other differentiable sigmoid function to define a soft threshold.
This analysis results in two theorems.
The second theorem extends the first.
The Appendix gives the mathematical details and proofs of the theorems.

The first result describes downstream causal influence for just one causal path from concept node $C_{j_1}$ to $C_{j_{k+1}}$.
A FCM may contain many other directed causal paths from $C_{j_1}$ to $C_{j_{k+1}}$.
So the \emph{total} causal change $\frac{dC_{j_{k+1}}}{dC_{j_1}}$ invokes the more general chain rule that sums over all the partial derivatives in (\ref{eq:partial-path}) in all the paths involved.
The second theorem in the Appendix states this total-causal result.
Discrete versions of the theorems also hold.
They require that one keep track of the discrete time steps as the causal activation flows from one node in a directed path to the next node.

FCM causal influence is a form of spreading activation in a semantic network.
The spreading activation resembles the activation dynamics of asynchronous \emph{feedback} neural networks.
These networks differ in kind from the popular feedfoward neural networks often found ``deep'' neural classifiers.
The spreading activation also roughly resembles the routed messages in message-passing probabilistic inference on DAGs.
Such messages propagate ``evidence'' from observed evidence nodes to a set of output nodes of interest.
Messages encode and quantify how much the state of a node affects beliefs about the state of another node.
Forward-inference on DAGs depends crucially on proper routing of these messages on the causal digraph.
Interlocking nonlinear differential or difference equations describe FCM causal influence.
The analogy with message passing is more accurate when we can step through or otherwise unfold these FCM dynamical systems in time.

The FCM causal influence of node $C_r$ on $C_s$ describes how much the state of $C_r$ affects the state of $C_s$.
The theorems that follow formalize our intuitions on causal influence and generalize previous theorems on the propagation of causal influence~\parencite{osoba-kosko2017}.
They show that causal influence is transitive and easy to track on loopless paths inside a FCM.
The influence also varies directly with the number of causal paths that connect two nodes.
The presence of cycles and loops can induce causal influences that are non-local in time.
The theorems apply directly only to acyclic paths even though users can again step through the causal links in discretized time in an arbitrary path.
The fuzzy or non-binary nature of most edge values can lead to rapid influence die-out because of the product nature of the causal influence.
Feedback loops can reverse such influence die-out even in small-scale FCMs.
The equilibrium result can be a periodic or aperiodic attractor.
It can be a ``hidden pattern'' or forward prediction in the FCM causal tangle.

We show now how FCM nodes influence one another through a weighted product of intervening causal edge strengths $e_{ij}$.
These results describe forward causal chaining along a directed path or summed over all such directed paths that connect two concept nodes.
The results all involve the \emph{transitive} causal product $e_{j_1 j_2} e_{j_2 j_3} \cdots e_{j_k j_{k+1}}$.

Consider first the directed causal path from concept node $C_i$ to node $C_k$ by way of the intervening node $C_j$:
\begin{align}
	C_i \xrightarrow[e_{ij}]{ }   C_j  \xrightarrow[e_{jk}]{} C_k  \;.
\end{align}
Then how does a change in the input node $C_i$ causally affect the downstream node $C_k$?
The chain rule of differential calculus gives a transitive-based product answer for the logistic concept node activation in (\ref{eq:logistic-node}):
\begin{align}
	\frac{\partial C_k }{\partial C_i}   =&  \frac{\partial C_k }{\partial C_j}\frac{\partial C_j }{\partial C_i} \\
		=&  \frac{\partial C_k }{\partial x_k} \frac{\partial x_k }{\partial C_j}\frac{\partial C_j }{\partial x_j} \frac{\partial x_j }{\partial C_i} \\
		=&  C_k(x_k) (1 - C_k(x_k)) e_{jk} C_j(x_j) (1 - C_j(x_j)) e_{ij}  \\
		=&  e_{ij} e_{jk} \psi_{j,k}   \label{eq:two-node-chain}
\end{align}
using (\ref{eq:logistic-factor}) - (\ref{eq:logistic-derivative}) if we define $\psi_{j,k} = C_j C_k (1 - Cj)(1 - C_k)$.
The weighting function $\psi_{j,k} \ge 0$ is maximal when $C_j = 1 - C_j = \frac{1}{2} = C_k = 1 - C_k$ holds for the fuzzy concept nodes $C_j$ and $C_k$.

So the induced causal effect of a change in $C_i$ depends directly on the transitive causal-edge product $e_{ij} e_{jk}$.
This causal influence decays in intensity the lesser $C_j$ or $C_k$ fires or occurs.
The edge product $e_{ij} e_{jk}$ is negative if exactly one of the edge values is negative.
It is positive otherwise.

The causal-influence result (\ref{eq:two-node-chain}) extends directly to longer causal chains.
Suppose there is a directed causal path of length $k$ from the initial concept node $C_{j_1}$ to the final node $C_{j_{k+1}}$:
\begin{align}
	C_{j_1} \xrightarrow[e_{j_1  j_2}]{ }   C_{j_2}  \xrightarrow[e_{j_2 j_3}]{}   \cdots \xrightarrow[e_{j_{k} j_{k+1}}]{} C_{j_{k+1}}  \;. \label{eq:k-length-chain}
\end{align}
Then the chain rule and (\ref{eq:logistic-factor}) - (\ref{eq:logistic-derivative}) again give the influence of $C_{j_1}$ on $C_{j_k}$ as a weighted product of the intervening causal edge strengths:
\begin{align}
	\frac{\partial C_{j_{k+1}}}{\partial C_{j_1}}  =  \prod_{l=1}^{k} e_{j_l j_{l+1}}  \psi_{j_2, j_3, \ldots , j_{k+1}}  \label{eq:k-node-chain}
\end{align}
where now the nonnegative weighting function $\psi_{j_2, j_3, \ldots , j_{k+1}}$ is the double product $\psi_{j_2, j_3, \ldots , j_{k+1}} = \prod_{l=2}^{k+1} C_{j_l} \prod_{l=2}^{k+1} (1 - C_{j_l})$.
The edge product $\prod_{l=1}^{k} e_{j_l j_{l+1}}$ is positive if the number of negative edges is even.
It is negative if the number of negative edges is odd.
The magnitude of the change $\frac{\partial C_{j_{k+1}}}{\partial C_{j_1}}$ can only decrease as the causal chain lengthens.
The fuzziness or partial firing of the concept nodes only exacerbates this monotone causal decay.

The causal influence in (\ref{eq:k-node-chain}) still holds if we replace the logistic activation function (\ref{eq:node-nonlinearity}) of concept node $C_j$ with an arbitrary monotonically nondecreasing functions $\Phi_j$.
Then $\frac{\partial C_{j_l}}{\partial x_{j_l}} \ge 0$ and so $\psi_{j_2, j_3, \ldots , j_{k+1}} \ge 0$ because the weighting function is just the product of these activation partial derivatives.
We can now state and prove Theorems 1 and 2 on causal influence in FCMs.
Theorem 1 summarizes and extends the above argument for an arbitrary single directed path in a FCM.
Theorem 2 further extends the argument to summing over all acylic paths between two such concept nodes.

\begin{theorem}  \bf Partial Causal Influence in Fuzzy Cognitive Maps.
Suppose that a fuzzy cognitive map has $n$ concept nodes $C_j$ and $n^2$ directed causal edges $e_{ij}$.
Suppose further that the concept nodes have monotonically nondecreasing activations:  $\frac{\partial C_j }{\partial x_j} \ge 0$ where the argument $x_j$ of $C_j(x_j)$ has the same inner-product form as in (\ref{eq:node-nonlinearity}).
Then the causal influence of the concept node $C_{j_1}$ on the downstream node $C_{j_{k+1}}$ of the length-$k$ directed causal chain
\begin{align}
	C_{j_1} \xrightarrow[e_{j_1  j_2}]{ }   C_{j_2}  \xrightarrow[e_{j_2 j_3}]{}   \cdots \xrightarrow[e_{j_{k} j_{k+1}}]{} C_{j_{k+1}}
\end{align}
is a nonnegatively weighted product of the intervening causal edge strengths $e_{j_1 j_2}, \ldots, e_{j_k j_{k+1}}$:
\begin{align}
	\frac{\partial C_{j_{k+1}}}{\partial C_{j_1}}  =  \prod_{l=1}^{k} e_{j_l j_{l+1}}  \psi_{j_2, j_3, \ldots , j_{k+1}}
\end{align}
where the weighting function $\psi_{j_2, j_3, \ldots , j_{k+1}}$ has the form
\begin{align}
	 \psi_{j_2, j_3, \ldots , j_{k+1}} = \prod_{l=2}^{k+1} \frac{\partial C_{j_l} }{\partial x_{j_l}}  \;.
\end{align}
\label{thm:partial-cause}
\end{theorem}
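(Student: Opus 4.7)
The plan is to prove the theorem by induction on the chain length $k$, using the multivariate chain rule as the fundamental tool. The underlying insight is that the functional composition defined by the directed path $C_{j_1} \to C_{j_2} \to \cdots \to C_{j_{k+1}}$ can be expanded as an alternating chain of inner-product aggregations $x_{j_l}$ followed by nonlinear activations $\Phi_{j_l}$, and the chain rule factorizes the total partial derivative into a product of these elementary derivatives.

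First I would isolate the two elementary derivatives at each link. From the inner-product form (\ref{eq:node-nonlinearity}), namely $x_{j_l} = \sum_i C_i \, e_{i j_l} + I_{j_l}$, the derivative $\frac{\partial x_{j_l}}{\partial C_{j_{l-1}}}$ equals $e_{j_{l-1} j_l}$ because only the $i = j_{l-1}$ term in the sum depends on $C_{j_{l-1}}$. The other elementary derivative, $\frac{\partial C_{j_l}}{\partial x_{j_l}}$, is nonnegative by the monotonicity hypothesis on $\Phi_{j_l}$. The base case $k = 1$ follows from a single chain-rule application, giving $\frac{\partial C_{j_2}}{\partial C_{j_1}} = \frac{\partial C_{j_2}}{\partial x_{j_2}} \cdot e_{j_1 j_2}$, which matches the claimed formula with $\psi_{j_2} = \frac{\partial C_{j_2}}{\partial x_{j_2}}$.

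For the inductive step, assume the formula holds for chains of length $k-1$. Viewing $C_{j_{k+1}}$ as the composition $\Phi_{j_{k+1}}(x_{j_{k+1}})$ whose argument depends on $C_{j_k}$, which in turn depends on $C_{j_1}$ along the given path, the chain rule yields
\begin{align}
\frac{\partial C_{j_{k+1}}}{\partial C_{j_1}} = \frac{\partial C_{j_{k+1}}}{\partial x_{j_{k+1}}} \cdot e_{j_k j_{k+1}} \cdot \frac{\partial C_{j_k}}{\partial C_{j_1}}.
\end{align}
Substituting the inductive hypothesis for the last factor and regrouping the $k$ edge weights and $k$ activation derivatives gives the product form $\prod_{l=1}^{k} e_{j_l j_{l+1}} \cdot \prod_{l=2}^{k+1} \frac{\partial C_{j_l}}{\partial x_{j_l}}$, which is precisely the asserted identity. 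Nonnegativity of $\psi_{j_2, \ldots, j_{k+1}}$ drops out because each factor $\frac{\partial C_{j_l}}{\partial x_{j_l}} \ge 0$.

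The main subtlety, rather than a genuine obstacle, is interpreting $\frac{\partial C_{j_{k+1}}}{\partial C_{j_1}}$ in a FCM that generally contains cycles and multiple paths between the two endpoints. The partial derivative here must be read as a \emph{path-specific} partial: the upstream node $C_{j_1}$ varies while all activations reaching $C_{j_{k+1}}$ via nodes off the specified path are held fixed, so only the single chain of functional dependence written down is traversed. With that reading the chain-rule expansion is unambiguous and the induction goes through without needing to unfold feedback loops in time. Theorem 2 will then recover the \emph{total} causal influence by summing the result of Theorem 1 over all acyclic paths from $C_{j_1}$ to $C_{j_{k+1}}$.
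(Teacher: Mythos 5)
Your proposal is correct and is essentially the paper's own argument: the paper proves the theorem by directly iterating the chain rule along the path, factoring each link as $\frac{\partial C_{j_{l+1}}}{\partial x_{j_{l+1}}}\,e_{j_l j_{l+1}}$ and collecting the two products, which is exactly what your induction does link by link. Your added remark that the derivative must be read as a path-specific partial (holding off-path influences fixed) is a reasonable clarification of a point the paper leaves implicit, but it does not change the substance of the argument.
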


\begin{proof}
The result follows from iterated applications of the chain rule:
\begin{align}
	\frac{\partial C_{j_{k+1}}}{\partial C_{j_1}}   =&  \frac{\partial C_{j_{k+1}}}{\partial C_{j_k}} \frac{\partial C_{j_{k}}}{\partial C_{j_{k-1}}} \cdots \frac{\partial C_{j_2} }{\partial C_{j_1}}  \label{eq:partial-path}   \\
		=& \frac{\partial C_{j_{k+1}}}{\partial x_{j_{k+1}}} \frac{\partial x_{j_{k+1}}}{\partial C_{j_{k}}} \frac{\partial C_{j_{k}}}{\partial x_{j_{k}}} \frac{\partial x_{j_{k}}}{\partial C_{j_{k-1}}}   \cdots \frac{\partial C_{j_2} }{\partial x_{j_2}} \frac{\partial x_{j_2}}{\partial C_{j_1}} \\
		=& \frac{\partial C_{j_{k+1}}}{\partial x_{j_{k+1}}} e_{j_{k} j_{k+1}} \frac{\partial C_{j_{k}}}{\partial x_{j_{k}}}   e_{j_{k-1} j_{k}} \cdots \frac{\partial C_{j_2} }{\partial x_{j_2}} e_{j_{k} j_{k+1}}   \\
		=&  \prod_{l=1}^{k} e_{j_l j_{l+1}} \prod_{l=2}^{k+1} \frac{\partial C_{j_l} }{\partial x_{j_l}} \\
		=&  \prod_{l=1}^{k} e_{j_l j_{l+1}}\psi_{j_2, j_3, \ldots , j_{k+1}}  \label{eq:k-node-chain-monotone} \;.
\end{align}
\end{proof}

Theorem 1 implies that the sign of the edge product $\prod_{l=1}^{k} e_{j_l j_{l+1}}$ depends on only the number of negative edges.
The edge product $\prod_{l=1}^{k} e_{j_l j_{l+1}}$ is positive if the number of negative edges is even.
It is negative if the number of negative edges is odd.
The magnitude of the change $\frac{\partial C_{j_{k+1}}}{\partial C_{j_1}}$ can only decrease as the causal chain lengthens.
The fuzziness or partial firing of the concept nodes only exacerbates this monotone causal decay.
The result is often fairly rapid die-out of the causal influence in practice.

A FCM often has more than one directed causal path from $C_{j_1}$ to $C_{j_{k+1}}$ because a FCM is not a tree in general.
Then the total causal change $\frac{dC_{j_{k+1}}}{dC_{j_1}}$ sums the single-path or partial causal influences (\ref{eq:partial-path}) over all paths from $C_{j_1}$ to $C_{j_{k+1}}$.
Let $\lambda(C_r, C_s)$ denote a directed fuzzy causal path from $C_r$ to $C_s$:
\begin{align}
	\lambda(C_r, C_s) &= C_r \xrightarrow[e_{r \cdot}]{ } \cdots \xrightarrow[e_{\cdot s}]{} C_s \;.
\end{align}
Let $\vec{\lambda}$ denote the set of node subscripts in the directed path:
\begin{align}
	\vec{\lambda} &= \{ r, \cdots , s \} \;.
\end{align}
We can restate the  causal influence along the lone path $\vec{\lambda}$ in Theorem \ref{thm:partial-cause} as
\begin{align}
	\partial \left(C_r, C_s; \vec{\lambda} \right)	= \psi_{\mathbf{\lambda}_2, \mathbf{\lambda}_3, \ldots , s} \prod_{i=1}^{|\vec{\lambda}|-1} e_{\mathbf{\lambda}_i \mathbf{\lambda}_{i+1}}
\end{align}
where $|\vec{\lambda}|$ denotes the number of nodes in the path.
Then we can likewise state the \emph{total} causal influence $D\left(C_r, C_s \right)$  of $C_r$ on $C_s$ as the sum of all single-path causal influences $\partial \left(C_r, C_s; \vec{\lambda}_m \right)$ over all such paths $\Lambda = \{\vec{\lambda}_m \}_m$ from $C_r$ to $C_s$:
\begin{align}
	D\left(C_r, C_s \right) = \sum_{m=1}^{|\Lambda|} \partial \left(C_r, C_s; \vec{\lambda}_m \right) \;.
\end{align}
Taking this path sum gives Theorem 2.

\begin{theorem}[\bf Total Causal Influence Over Acyclic FCM Paths]
Suppose an $n$-node FCM satisfies the hypothesis of Theorem 1.
Suppose that there are no cycles in any of the directed paths from concept node $C_r$ to $C_s$.
Then the total causal influence of concept node $C_r$ on $C_s$ is
\begin{align}
	D\left(C_r, C_s \right) = \sum_{m=1}^{|\Lambda|} \partial \left(C_r, C_s; \vec{\lambda}_m \right)
\end{align}
where $\Lambda = \{\vec{\lambda}_m \}$ is the complete set of directed paths $\lambda_m$ from $C_r$ to $C_s$
\label{thm:total-cause-dag}
\end{theorem}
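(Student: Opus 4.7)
The plan is to obtain Theorem 2 as a path-sum extension of Theorem 1, by repeated application of the multivariable chain rule to the finite directed acyclic subgraph induced by the directed paths from $C_r$ to $C_s$. Because the hypothesis rules out cycles on any such path, the collection $\Lambda$ is finite and $C_s$ can be written as a composition of node activations with $C_r$ as the only free variable upstream. Hence the total derivative $D(C_r,C_s) = dC_s/dC_r$ is well-defined and can be computed by unfolding the chain rule without encountering any implicit self-reference.

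First, I would peel off one chain-rule step at $C_s$ over the incoming edges from nodes that lie on some path in $\Lambda$:
\begin{align}
\frac{dC_s}{dC_r} \;=\; \sum_{p} \frac{\partial C_s}{\partial C_p}\, \frac{dC_p}{dC_r},
\end{align}
where the sum runs over parents $C_p$ of $C_s$ with index $p$ appearing in some $\vec\lambda \in \Lambda$. Each local partial $\partial C_s / \partial C_p$ factors through the pre-activation $x_s$ into $(\partial C_s/\partial x_s)\, e_{ps}$ exactly as in the proof of Theorem 1. Iterating this step back toward $C_r$ builds an expansion tree whose internal edges carry the factors $e_{ij}$ and $\partial C_j/\partial x_j$ and whose leaves sit at $C_r$. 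I would then prove by induction on the maximum path length in $\Lambda$ that this recursion terminates and that each leaf corresponds bijectively to exactly one path $\vec\lambda_m \in \Lambda$.

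Next, I would read off the edge-and-derivative labels along the unique branch leading to each leaf. By Theorem 1, the product of these labels equals the single-path influence $\partial(C_r, C_s; \vec\lambda_m) = \psi_{\vec\lambda_m}\prod_{i=1}^{|\vec\lambda_m|-1} e_{\vec\lambda_{m,i}\, \vec\lambda_{m,i+1}}$. Grouping the leaves by their underlying path therefore collapses the expanded chain-rule sum into
\begin{align}
D(C_r,C_s) \;=\; \sum_{m=1}^{|\Lambda|} \partial(C_r, C_s; \vec\lambda_m),
\end{align}
which is the desired identity.

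The main obstacle will be justifying the tree expansion and the bijection between leaves and paths, and this is exactly where the acyclicity hypothesis earns its keep. Acyclicity along every path from $C_r$ to $C_s$ prevents any node from recurring on a branch (so no path is counted twice and no branch loops) and guarantees that every branch terminates at $C_r$ after finitely many steps (so the sum is finite and equals a plain path sum rather than a resolvent or Neumann-series in the Jacobian). Making the induction on path length explicit, checking that the parent-decomposition at each step is well-defined, and verifying that the leaf-to-path map is a bijection is the only real bookkeeping the argument requires.
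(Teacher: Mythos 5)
Your proposal is correct and follows essentially the same route as the paper: the paper obtains Theorem 2 by invoking the generalized multivariable chain rule to sum the single-path influences of Theorem~1 over all directed paths from $C_r$ to $C_s$, which is exactly your path-sum expansion, with acyclicity guaranteeing a finite, well-defined sum. The paper in fact gives no formal proof of Theorem~2 beyond this assertion, so your expansion-tree argument with the induction on maximum path length and the leaf-to-path bijection merely supplies the bookkeeping the paper leaves implicit.
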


\section*{Appendix: Textual Justifications for Thucydides Trap FCM}

\begin{table}
\centering
	\begin{tabular}{|l|c|p{3.5in}|}
	\hline
	\textbf{Edge} & \textbf{Sign} & \textbf{Text quote} \\
	\hline
  \text{FEAR/INT/ENT} $\rightarrow$ \text{WAR*} & \text{+ve} &
	\begin{minipage}[t]{3.4in}
		Pg. 39: ``Thucydides identifies three primary drivers fueling this dynamic that lead to war: interests, fear, and honor.''\\
	  ``[Interests refer to the] survival of the state and its sovereignty in making decisions in its domain free from coercion...''\\
	  ``[...] ruling powers' fears often fuel misperceptions and exaggerate dangers.''\\
	  ``[...] Thucydides's concept of honor encompasses what we now think of as a state's sense of itself, its convictions about the recognition and respect it is due, and its pride. As Athens' power grew over the fifth century, so too did its sense of entitlement.''
	\end{minipage}
	\\
	\hline
  \text{ShrdCult} $\rightarrow$ \text{WAR*} & \text{-ve} &
	\begin{minipage}[t]{3.4in}
		Pg. 200: ``Cultural commonalities can help prevent conflict''\\
		Pg. 136: ``... the fundamental source of conflict in the post-Cold War world would [be] cultural.'' (quoting Huntington)
	\end{minipage}
	\\
	\hline
  \text{ally} $\rightarrow$ \text{WAR*} & \text{+ve} &
	\begin{minipage}[t]{3.4in}
		Pg. 211--212: ``Alliances can be a fatal attraction. [...] coalitions have sought to create a balance of power to maintain regional peace and security. But such alliances also create risks -- since alliance ties run in both directions.''\\
		Pg. 57--58: ``Britain had no vital national interests at stake in the Balkans. Nevertheless, it was pulled into the fire, partly because of entangling alignments...''
	\end{minipage}
	\\
	\hline
  \text{NUKE} $\rightarrow$ \text{WAR*} & \text{-ve} &
	\begin{minipage}[t]{3.4in}
		Pg. 206--210: ``nuclear weapons have no precedent.'' \\
		``Under [conditions of mutually assured destruction], one state's decision to kill another is simultaneously a choice to commit national suicide.'' \\
		``China has also developed a nuclear arsenal so robust that it creates a [...] version of MAD with the United States.''
	\end{minipage}
	\\
	\hline
	\end{tabular}
\caption{Table of textual justifications for the causal edges in the FCM of Allison's USA-China Thucydides Trap. Primary reference: \parencite{allison2017}. Secondary sources: \parencite{allison2015, thucydides-jowett, huntington1993, huntington1997}.}
\label{tab:trap-edge-just-1}
\end{table}

\begin{table}
\centering
	\begin{tabular}{|l|c|p{3.5in}|}
	\hline
	\textbf{Edge} & \textbf{Sign} & \textbf{Text quote} \\
	\hline
  \text{shi} $\rightarrow$ \text{WAR*} & \text{+ve} &
	\begin{minipage}[t]{3.4in}
		Pg. 148: ``... the evolving context in which a strategic situation occurs is critical, because it determines the \emph{shi} of that situation. [This] most closely describes the momentum inherent in any given situation...'' \\
		Pg. 175: ``America's history of [recent] military interventions would play an outsized role in shaping Washington's response.'' \\
		Pg. 215: ``Bismarck [...] described statecraft as essentially listening for the footsteps of God and then grabbing the hem of His garment as He goes by.''
	\end{minipage}
  \\
	\hline
  \text{econdep} $\rightarrow$ \text{usecon/chnecon} & \text{-ve} &
	\begin{minipage}[t]{3.4in}
		Pg. 194: ``States can be embedded in larger economic [...] institutions that constrain historically normal [conflict behaviors].'' \\
		Pg. 210: ``Thick economic interdependence raises the cost and thus lowers the likelihood of war.''
	\end{minipage}
	\\
	\hline
  \text{geod} $\rightarrow$ \text{FEAR/WAR*} & \text{-ve} &
	\begin{minipage}[t]{3.4in}
		Pg. 109: `` 'Making China Great Again' means [...] reestablishing control over the territories of 'Greater China'.''\\
		Pg. 161``...there is currently little chance of an accidental collision between US and Chinese ships. [...] the 'tyranny of distance' raises questions about America's ability to sustain a campaign against China in [the East and South China Seas].''
	\end{minipage}
	\\
	\hline
  \text{dipl} $\rightarrow$ \text{ENT/INT} & \text{-ve} &
	\begin{minipage}[t]{3.4in}
		Pg 190: ``Higher authorities can help resolve rivalry without war. [...] to the extent that states can be persuaded to defer to the constraints and decisions of supranational authorities or legal frameworks, [...] these factors can play significant roles in managing conflicts that would otherwise end in war.''\\
		Pg. 198:``Wily statemen make a virtue of necessity and distinguish between needs and wants.''
	\end{minipage}
	\\
	\hline
	\end{tabular}
\caption{Table of textual justifications for the causal edges in the FCM of Allison's USA-China Thucydides Trap. Primary reference: \parencite{allison2017}. Secondary sources: \parencite{allison2015, thucydides-jowett, huntington1993, huntington1997}.}
\label{tab:trap-edge-just-2}
\end{table}

\begin{table}
\centering
	\begin{tabular}{|l|c|p{3.5in}|}
	\hline
	\textbf{Edge} & \textbf{Sign} & \textbf{Text quote} \\
	\hline
  \text{WAR*} $\rightarrow$ \text{WAR*} & \text{+ve} &
	\begin{minipage}[t]{3.4in}
		Continuity constraint: states at war are likely to remain in conflict barring military resolution or exogenous shock.
	\end{minipage}
	\\
	\hline
  \text{chnecon/usecon} $\rightarrow$ \text{INT} & \text{+ve} &
	\begin{minipage}[t]{3.4in}
		Pg. 47: ``The contest between a rising and ruling power often intensifies competition over scarce reseources. When an expanding economy compels the first to reach further afield to secure essential commodities [...] the competition can become a resource scramble.''
	\end{minipage}
	\\
	\hline
  \text{chnpub/uspub} $\rightarrow$ \text{ENT} & \text{+ve} &
	\begin{minipage}[t]{3.4in}
		Pg. 37: ``Backing down was a non-starter [...] the Athenian populace was unwilling to bow to Spartan demands...''\\
		Pg. 120: ``...'winning or losing public support is an issue that concerns the [Chinese Communist Party]'s survival or extinction.' '' (quoting Xi Jinping)\\
		Pg. 171: ``the White House Situation Room cannot back down: video of the ship's wreckage and stranded US sailors on cable news and social media has made that impossible'' (part of a hypothetical US-China escalation scenario)\\
		Pg. 171: ``As millions of it citizens' social media postings are reminding the government, after its century of humiliation at the hands of sovereign powers, the ruling Communist Party has promised: `never again.' '' (in the same hypothetical US-China escalation scenario)
	\end{minipage}
	\\
	\hline
	\end{tabular}
\caption{Table of textual justifications for the causal edges in the FCM of Allison's USA-China Thucydides Trap. Primary reference: \parencite{allison2017}. Secondary sources: \parencite{allison2015, thucydides-jowett, huntington1993, huntington1997}.}
\label{tab:trap-edge-just-3}
\end{table}

\begin{table}
\centering
	\begin{tabular}{|l|c|p{3.5in}|}
	\hline
	\textbf{Edge} & \textbf{Sign} & \textbf{Text quote} \\
	\hline
  \text{chnd/usd} $\rightarrow$ \text{FEAR} & \text{+ve} &
	\begin{minipage}[t]{3.4in}
		Pg. 54: ``Because states can never be certain about each other's intent, they focus instead on [military] capabilities. Defensive actions by one power can often seem threatening to its opponent...''
	\end{minipage}
	\\
	\hline
	\text{ShrdCult} $\leftrightarrow$ \text{econdep} & \text{+ve} &
	\begin{minipage}[t]{3.4in}
		\parencite{huntington1993} Pg. 34: ``...cultural difference exacerbates economic conflict. [...] the antipathies are not racial but cultural. The basic values, attitudes, behavioral patterns of the two societies could hardly be more different. The economic issues between the United States and Europe are no less serious than those between the United States and Japan, but they do not have the same political salience and emotional intensity because the differences between American culture and European culture are so much less than those between American civilization and Japanese civilization.''
	\end{minipage}
	\\
	\hline
	\end{tabular}
\caption{Table of textual justifications for the causal edges in the FCM of Allison's USA-China Thucydides Trap. Primary reference: \parencite{allison2017}. Secondary sources: \parencite{allison2015, thucydides-jowett, huntington1993, huntington1997}.}
\label{tab:trap-edge-just-4}
\end{table}

\end{document}